\newcommand{\poly}{\operatorname{poly}}
\theoremstyle{plain}
\newtheorem{theorem}{Theorem}[section]
\newtheorem{lemma}[theorem]{Lemma}
\theoremstyle{definition}
\newtheorem{definition}[theorem]{Definition}
\newtheorem{assumption}[theorem]{Assumption}
\theoremstyle{remark}
\newtheorem{remark}[theorem]{Remark}
\icmltitlerunning{Submission and Formatting Instructions for ICML 2026}
\begin{document}

\twocolumn[
  \icmltitle{Can Test-time Computation Mitigate Reproduction Bias in \\Neural Symbolic Regression?}

  % It is OKAY to include author information, even for blind submissions: the
  % style file will automatically remove it for you unless you've provided
  % the [accepted] option to the icml2026 package.

  % List of affiliations: The first argument should be a (short) identifier you
  % will use later to specify author affiliations Academic affiliations
  % should list Department, University, City, Region, Country Industry
  % affiliations should list Company, City, Region, Country

  % You can specify symbols, otherwise they are numbered in order. Ideally, you
  % should not use this facility. Affiliations will be numbered in order of
  % appearance and this is the preferred way.
  \icmlsetsymbol{equal}{*}

  \begin{icmlauthorlist}
    \icmlauthor{Shun Sato}{yyy}
    \icmlauthor{Issei Sato}{yyy}
  \end{icmlauthorlist}

  \icmlaffiliation{yyy}{Department of Computer Science, 
Graduate School of Information Science and Technology, The University of Tokyo, Tokyo, Japan}

  \icmlcorrespondingauthor{Shun Sato}{sato-shun471@g.ecc.u-tokyo.ac.jp}

  % You may provide any keywords that you find helpful for describing your
  % paper; these are used to populate the "keywords" metadata in the PDF but
  % will not be shown in the document
  \icmlkeywords{Machine Learning, ICML}

  \vskip 0.3in
]

% this must go after the closing bracket ] following \twocolumn[ ...

% This command actually creates the footnote in the first column listing the
% affiliations and the copyright notice. The command takes one argument, which
% is text to display at the start of the footnote. The \icmlEqualContribution
% command is standard text for equal contribution. Remove it (just {}) if you
% do not need this facility.

% Use ONE of the following lines. DO NOT remove the command.
% If you have no special notice, KEEP empty braces:
\printAffiliationsAndNotice{}  % no special notice (required even if empty)
% Or, if applicable, use the standard equal contribution text:
% \printAffiliationsAndNotice{\icmlEqualContribution}

\begin{abstract}

Mathematical expressions play a central role in scientific discovery.
Symbolic regression aims to automatically discover such expressions from given numerical data. Recently, Neural symbolic regression (NSR) methods that involve Transformers pre-trained on synthetic datasets have gained attention for their fast inference, but they often perform poorly, especially with many input variables. 
In this study, we analyze NSR from both theoretical and empirical perspectives and show that (1) naive token-by-token generation is ill-suited for NSR, as Transformers cannot compositionally generate tokens while validating numerical consistency, and (2) the search space of NSR methods is greatly restricted due to reproduction bias, where the majority of generated expressions are merely copied from the training data.
We further examine whether tailored test-time strategies can reduce reproduction bias and show that providing additional information at test time effectively mitigates it. These findings contribute to a deeper understanding of the limitation of NSR approaches and provide guidance for designing more robust and generalizable methods. Code is available at \url{https://github.com/Shun-0922/Mem-Bias-NSR}.
\end{abstract}

\section{Introduction}
Discovering underlying equations from collected experimental data is a crucial process in many fields of scientific research.
Symbolic regression is a branch of regression analysis that seeks to automatically identify the underlying mathematical expressions. 
In contrast to methods that model data without explicit mathematical expressions,
symbolic regression offers advantages in terms of interpretability and generalizability. This is because the outputs of symbolic regression are usually compact, human-readable equations, making them less susceptible to overfitting. However, symbolic regression is a challenging task due to its vast search space; the number of possible mathematical expressions grows exponentially with expression length or the number of input variables. Applications for symbolic regression span various fields of scientific research such as physics \citep{tenachi2023deep}, materials science \citep{wang2019symbolic}, and weather forecasting \citep{abdellaoui2021symbolic}.

Various methods for symbolic regression have been proposed in recent years. Traditionally, approaches based on genetic programming (GP) \citep{koza1994genetic} have been used to solve symbolic regression. These methods tend to be computationally expensive because they generate each expression entirely from scratch. To mitigate this inefficiency, a research direction called neural symbolic regression (NSR) has emerged. 
NSR methods leverage encoder-decoder Transformer architectures \cite{vaswani2017attention} pre-trained on large-scale synthetic datasets \cite{biggio2021neural, valipour2021symbolicgpt}. NSR methods generate expressions similar to natural language processing tasks, where expressions are generated token-by-token in an auto-regressive manner. 
Since a single forward pass through the Transformer suffices to output a mathematical token (e.g., $x_1, \sin, +$), NSR models can generate solutions far more quickly than GP-based approaches. However, NSR methods often fall short in terms of numerical accuracy, with particularly poor performance when the number of input variables is large \citep{kamienny2022end, bendinelli2023controllable}. This study aims to uncover the underlying cause of this drawback and explore methods to alleviate it.

Our analysis begins by examining the mechanisms a Transformer relies on to select the next token during expression generation. Specifically, we theoretically analyze the limitations Transformers face when generating mathematical expressions.
A natural and ideal token-generation strategy would be to select, in a compositional manner at each step, the token that is most appropriate in light of the numerical data.
However, by using circuit complexity theory, we show that Transformers fail to generate expressions in such ways; they cannot compositionally generate mathematical expressions while taking numerical data into account. In other words, while Transformers generate tokens sequentially, they are not able to carry out meaningful processing at every step of token generation, suggesting that naive token-by-token generation is not particularly suitable for NSR.
To illustrate this limitation, consider a situation where a Transformer has generated an expression up to $x_1^2 + \sin(x_2) +$. Our analysis implies that Transformers are unable to internally compute which leaf token (e.g., $x_1, x_2, x_3, \ldots$) would be the most appropriate given the input numerical data. 
The result indicates that in practice, Transformers generate expressions by some alternative, more coarse-grained mechanism instead of generating tokens in a compositional manner.

We next investigated how NSR methods actually generate expressions under empirical conditions.
We hypothesize that, in NSR methods, naively using a Transformer for inference leads to \textbf{reproduction bias}, meaning that models struggle to generate novel expressions not seen during training and instead tend to generate expressions copied from the training data. Given that the expressions in the training data typically represent only a small subset of the full space of possible expressions, our hypothesis implies that standard NSR methods operate within a significantly constrained search space. We investigated this hypothesis in NSR methods such as NeSymReS \citep{biggio2021neural}, a pioneering work in NSR models. We found that the majority of expressions generated by Transformers are expressions that were included in the training dataset, which supports our hypothesis of reproduction bias. Prior work has highlighted NSR methods’ limited generalizability with respect to the range of numerical data—e.g., models trained on data whose input variable $x$ lies in the interval $[-1,1]$ often fail when evaluated on inputs from the wider interval $[-2,2]$ \citep{li2024generative, shojaee2023transformer}. However, the reproduction bias that we identify is orthogonal to this phenomenon and represents an even more fundamental limitation: NSR models often fail to generalize even within their training domain. This work is the first to show that standard NSR models primarily copy training expressions instead of composing familiar components into genuinely novel formulas.

Towards the end of this paper, we explore methodologies to mitigate the reproduction bias of standard NSR models and improve numerical accuracy. Since the theoretical analysis suggested that naive auto-regressive generation may be inappropriate for NSR, we focus particularly on test-time strategies and investigate how they affect reproduction bias and numerical accuracy. We compare three strategies: decoding with a large beam size, decoding using MCTS, and providing verification feedback at the subtree level. The last strategy is a new method that we propose, which we refer to as neural symbolic regression guided by verified subtrees (NSR-gvs). We found that providing new information to the model during test-time leads to generating expressions beyond the training dataset. However, we also identify cases where reproduction bias was mitigated but numerical accuracy decreased, as well as cases where numerical performance improved despite little alleviation in reproduction bias.

The contributions of our work are summarized as follows:
\begin{itemize}
    \item We formally show that Transformers lack the ability to compositionally generate expressions while accounting for numerical data.
    \item We empirically demonstrate, under various settings, that naively applying a Transformer to symbolic regression leads to reproduction bias. 
    \item We compare varying test-time computing strategies and analyze how such strategies affect reproduction bias and numerical accuracy.
\end{itemize}

\section{Related Work}
\label{Sec:related_work}

\begin{table*}
  \caption{Comparison between our work and other major NSR studies}
  \label{Tab:related_work}
  \centering
    \begin{tabular}{ccccc}
    \toprule
        NSR Methods & Automatic Training & Direct Constant & Information Added & Assessing \\
        ~ & Data Generation & Prediction & During Test-time & Reproduction Bias \\
        \midrule
        \citet{biggio2021neural} & \ding{51} & - & - & -\\
        \citet{kamienny2022end} & \ding{51} & \ding{51} & - & - \\
        \citet{shojaee2023transformer} & \ding{51} & \ding{51} & MCTS Feedback & - \\
        \citet{li2024generative} & - & - & Historical Context & - \\
        \citet{bendinelli2023controllable} & \ding{51} & - & Prior Knowledge & -\\
        Ours (NSR-gvs)& \ding{51} & - & Verified Subtrees & \ding{51} \\
    \bottomrule
    \end{tabular}
\end{table*}

Several approaches to symbolic regression exist, such as GP, brute-force algorithms, reinforcement learning, and NSR. Since our study focuses on analyzing and improving NSR methods, we mainly describe NSR in detail in this section, and explain other symbolic regression methods in Appendix \ref{Sec:additional_related_work}.

Traditional symbolic regression methods such as GP generate each equation from scratch, resulting in long inference times, with equation generation potentially taking hours. In order to achieve a shorter inference time, studies such as NeSymReS \citep{biggio2021neural} and SymbolicGPT \citep{valipour2021symbolicgpt} carried out large scale pre-training of Transformers. In these studies, an artificial dataset consisting of millions of randomly generated equations was used for training. These methods, often categorized as NSR, can generate an expression in just a few seconds, significantly reducing inference time compared with other approaches.

In recent years, several studies, summarized in Table \ref{Tab:related_work}, have focused on improving NSR methods. Studies such as \citep{kamienny2022end} and \citep{vastl2024symformer} proposed an end-to-end approach using a Transformer model to directly predict full mathematical expressions including constants, whereas previous methods followed a two-step procedure where constant fitting had to be done separately. \citet{lalande2023transformer} analyzed several different architectures to find a suitable encoder architecture for NSR. \citet{shojaee2023transformer} focused on improving the decoding strategy for NSR, incorporating the MCTS algorithm during the generation of expressions.
In their study, \citet{li2024generative} trained a Transformer model to imitate the process of improving mathematical formulas, as performed in the reinforcement learning-based approach proposed by \citet{mundhenk2021symbolic}. \citet{bendinelli2023controllable} proposed a model called NSRwH that enables incorporating prior knowledge, which is often available during application in scientific research. For example, scientists may anticipate symmetries between variables or expect certain partial expressions to appear in the mathematical laws governing the data. NSRwH adds a dedicated encoder that processes such prior knowledge, allowing the model to generate expressions that are consistent with both the numerical data and the prior knowledge provided.

More recently, there has been growing research on methods that iteratively refine mathematical expressions, most of which rely on large language models (LLMs) rather than pre-trained Transformer models. These methods are similar to NSR-gvs, one of the test-time computation approaches considered in this paper, in that they iteratively improve their outputs by incorporating feedback from the generated expressions. In \citep{merler2024context} and \citep{sharlin2024context}, the authors introduce approaches in which a base equation structure is generated using LLMs, and the equation is subsequently improved iteratively by receiving feedback from external numerical solvers. While \citet{shojaee2023transformer} follow a similar methodology, it incorporates supplementary descriptions regarding the variables in the prompt, facilitating more effective use of the LLM’s scientific knowledge. \citet{grayeli2024symbolic} proposed a method that uses an LLM to identify “concepts” representing features of high-performing expressions and leverages them to further evolve a set of equations. \citet{zhang2025rag} introduces an iterative algorithm that replaces features of suboptimal expressions at each step while incorporating relevant expressions as needed.
Pre-training-free methods described above may have the potential to address some of the limitations of conventional NSR approaches. On the other hand, using a pre-trained Transformer, as opposed to an LLM, offers certain advantages similar to those of small language models (SLMs), such as keeping the model size manageable and enhancing domain specificity through careful design of the training data. For these reasons, we believe that conducting a deeper analysis of pre-trained Transformer-based NSR and exploring ways to improve it remains a valuable research direction.

\section{Problem Formulation}\label{Sec:preliminary}

We formalize NSR as the problem of learning a \emph{parameterized symbolic regressor}
$S_{\boldsymbol\theta}$ that maps a numerical dataset
$\mathcal{D}$ to a symbolic expression
$\hat e = S_{\boldsymbol\theta}(\mathcal{D})$.
The learning algorithm is formulated as minimizing a loss that measures how well $\hat e$ matches the ground-truth expression $e^{\ast}$ underlying $\mathcal{D}$.
In this section, we specify how synthetic training pairs
$(e^{\ast},\mathcal{D})$ are generated in NeSymReS \citep{biggio2021neural}, since it is the foundational work underlying our research.

%-------------------------------------------------
\subsection{Synthetic Expression Distribution}\label{Ssec:expr_dist}

We first sample a random
binary–unary tree
whose internal nodes are operators and whose leaves are
variables.

Let $\mathcal V=\{x_1,\dots,x_d\}$ be a finite set of variables,
$\mathcal O_{\mathrm{bin}}$ the binary operators (e.g., $\{+,\, -,\, \times,\, \div\}$),  and $\mathcal O_{\mathrm{un}}$ the unary operators (e.g., $\{\sin,\,\cos,\,\log,\,\exp \}$).
Denote by $\mathcal{C}=[c_{\min},c_{\max}]\subset\mathbb{R}$ the interval from which numeric constants are drawn. The complete vocabulary for the expression is $\Sigma=\mathcal{V}\cup\mathcal{O}_{\mathrm{bin}}\cup\mathcal{O}_{\mathrm{un}}\cup\mathcal{C}$.

Let $\mathcal{E}$ be an expression space and \(p_{\mathcal{E}}\) be the generator of symbolic expressions employed in NeSymReS \citep{biggio2021neural}. We also denote by $p_{\mathrm{Tree}}$ the generator of unary-binary trees introduced by \citet{lample2019deep}.
We write $e^{\ast}\sim p_{\mathcal{E}}$ for the following procedure.
\begin{enumerate}
\item Draw a random binary-unary tree $T \sim p_{\mathrm{Tree}}$.
\item Assign internal nodes independently and uniformly from
$\mathcal {O}_{\mathrm{bin}}\cup\mathcal{ O}_{\mathrm{un}}$, and leaves uniformly from the variable set $\mathcal V$,
resulting in a \emph{template} expression $e_{\mathrm{templ}}$.
\item For each unary operator $u$, sample a constant $c_\mathrm{mul}$ from distribution $\mathcal{D}_\mathrm{mul}$ and replace $u$ with $c_\mathrm{mul} u$; otherwise keep the unary operator as is.
\item For each variable $x$, sample a constant $c_\mathrm{mul}$ from distribution $\mathcal{D}_\mathrm{mul}$ and a constant $c_\mathrm{add}$ from distribution $\mathcal{D}_\mathrm{add}$ and replace $x$ with $c_\mathrm{mul} x + c_\mathrm{add}$; otherwise keep the variable as is.
\item The resulting expression is the final
      $e^{\ast}\in\mathcal E$.
\end{enumerate}

\subsection{Synthetic Dataset Generation}
\label{Ssec:data_dist}
Given an expression $e^{\ast}$, we construct the dataset
\[
\mathcal{D}
  \;=\;
  \{\,(\mathbf x_i,\,y_i)\,\}_{i=1}^{n},
  \quad
   x_{ij} \;\sim\; \mathcal{U}([x_{\min,j},x_{\max,j}])
\]
\[
  \text{for}
  \quad
   j=1,2,\ldots,d,
  \quad
  y_i \;=\; e^{\ast}(\mathbf x_i). 
\]
Where $\{[x_{min,j},x_{max,j}]\}_{j=1}^d$ denotes the intervals for each independent variable. The joint distribution of training pairs is therefore
\(
  (e^{\ast},\mathcal D)
  \sim
p_{\mathcal{E}}\times\mathcal G,
\)
where $\mathcal G$ denotes the above stochastic data generation process.

We now denote by $\Gamma =\mathcal{V}\cup\mathcal{O}_{\mathrm{bin}}\cup\mathcal{O}_{\mathrm{un}}\cup\{C,\textsc{End}\}$ the vocabulary for token sequences, where $C$ is a placeholder token to represent constants, and \textsc{End} denotes the explicit end-of-sequence marker. The vocabulary $\Gamma$ is slightly different from $\Sigma$ since continuous numeric constants cannot be represented with a finite number of tokens.

Let 
\(
  \operatorname{seq}:\mathcal E\!\longrightarrow\!\Gamma^{\ast}
\) 
be a \emph{serialization map} that converts any symbolic
expression into its unique prefix token representation.
For a ground–truth expression \(e^{\ast}\in\mathcal E\) we set
\[
  \mathbf s^{\ast}
  \;=\;
  \operatorname{seq}\bigl(e^{\ast}\bigr)
  \;=\;
  (s^{\ast}_{1},\ldots,s^{\ast}_{L}),
  \qquad
  L:=|\mathbf s^{\ast}|.
\]

\noindent
The predictive distribution 
\(
  q_{\boldsymbol\theta}
  (\,\cdot\,\mid \mathbf s_{<j},\mathcal D)
\)
is realized by an \emph{encoder–decoder Transformer}
parametrized by \(\boldsymbol\theta\).
Conditioned on the dataset~\(\mathcal D\) (encoded by the encoder) and
the previously emitted prefix \(\mathbf s_{<j}\),
the decoder outputs a probability over the next token
\(s_{j}\in\Gamma\).

The \emph{token-level loss} for a single training pair
\((e^{\ast},\mathcal D)\) is then
\begin{align}
L_{\mathrm{tok}}\!\bigl(e^{\ast};\boldsymbol\theta\bigr)
=
-\sum_{j=1}^{L}\log q_{\boldsymbol{\theta}}(      s^{\ast}_{j}\mid\mathbf{s}^{\ast}_{<j},\,\mathcal{D}).
\end{align}

Note that, in practice, the training dataset is the collection of $e_{\mathrm{templ}}$, and both $e^\ast$ and $\mathcal{D}$ are generated dynamically during training. Further details concerning the work of NeSymReS \citep{biggio2021neural} are described in Appendix \ref{Sec:detail_nesymres}.

\begin{figure*}[!t]
  \begin{center}
  \centerline{\includegraphics[width=1.6\columnwidth]{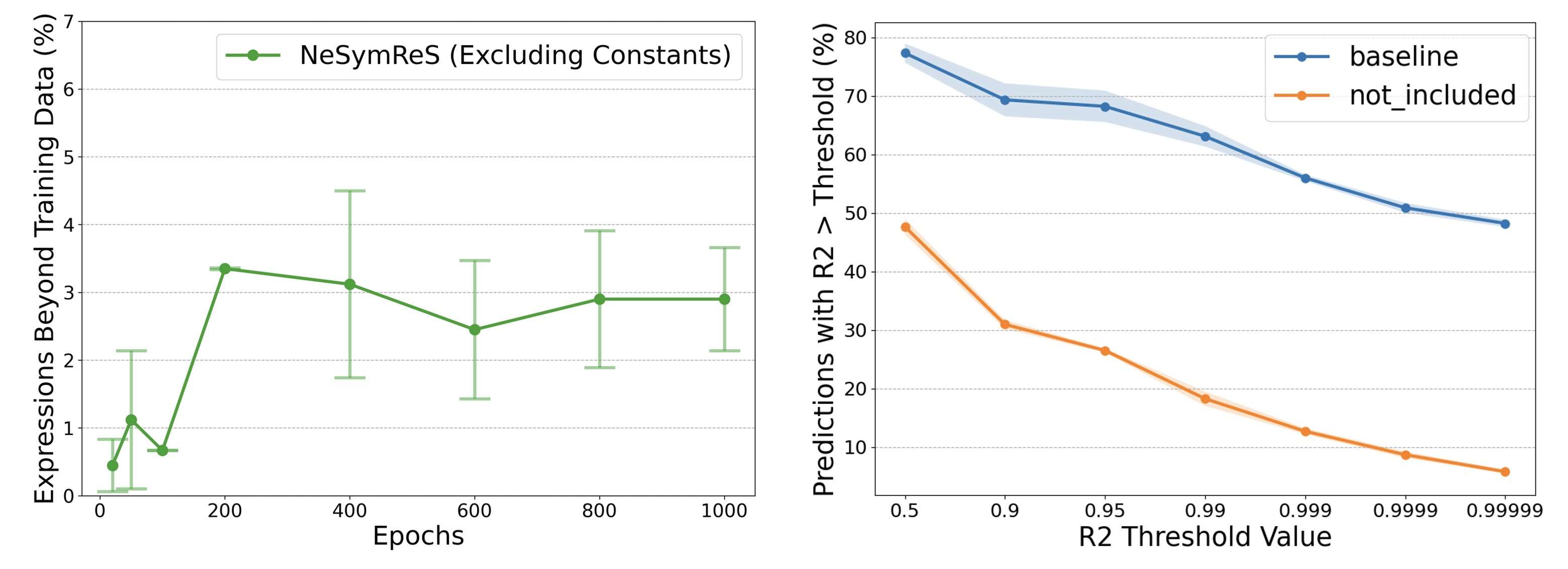}}
  \caption{(Left) Percentage of expressions beyond the training dataset generated by NeSymReS on the \texttt{not\_included} dataset. Throughout the training procedure, NeSymReS can hardly generate expressions that are not included in the training data, indicating strong reproduction bias.
  (Right) NeSymReS exhibits strong fitting performance on the \texttt{baseline} dataset but performs poorly on expressions from the \texttt{not\_included} dataset, whose tree structures are absent from the training data. The result indicates the severe effect of reproduction bias on numerical accuracy.}
  \label{Fig:baseline_vs_not_included}
  \end{center}
\end{figure*}

\section{Theoretical Analysis on Expression Generation Ability of Transformers}
\label{Sec:theoretical_analysis}

Transformer-based symbolic regression tends to suffer from low performance, particularly when the number of input variables is large.
In this section, we explore the theoretical basis of this limitation.
Ideally, Transformers should be able to generate tokens in a compositional manner, while maximizing the probability for the final expression to fit the numerical data.
However, we show that Transformers inherently lack the capacity to generate expressions compositionally while accounting for their numerical characteristics.
We introduce a simplified version of the symbolic regression task and show that Transformers are not expressive enough to solve the task.

We define the \emph{last-token prediction problem} as the task of predicting the most suitable final token in an otherwise complete mathematical expression. Although predicting the entire optimal expression is NP-hard \cite{virgolin2022symbolic}, this task is much easier since the search space is limited to several leaf tokens. We present a formal definition of this task in the following.

We first introduce $\mathrm{expr}: \Gamma^\ast \times \mathbb{R}^{(d+1)*n} \!\longrightarrow\! \mathcal{E}$, a function that maps a token sequence $\mathbf{s}$ to the most appropriate expression $e_{\mathbf{s}}$ that can be represented by $\mathbf{s}$, taking into account the numerical data $\mathcal{D}$. Since the token sequence may contain the placeholder token $C$ representing constants, the mapping is tasked with identifying the optimal values for these constants and transforming the sequence into a corresponding expression tree.

\begin{definition}[Last-token prediction problem]
Given numerical data $\mathcal{D}$ of $n$ features-value pairs $(\mathbf x_i,y_i) \in \mathbb{R}^{d} \times \mathbb{R}$, a metric $\mathcal{L}: \mathbb{R}^n \times \mathbb{R}^n \to \mathbb{R}$, and an incomplete token sequence $\widetilde{\mathbf{s}}$ that forms a prefix representation of an expression when terminated with a leaf token, the last-token prediction problem asks to find a leaf token $u^\ast$ such that:
\[
u^\ast = \underset{u \in \Gamma}{\operatorname{arg min}}~\mathcal{L}(\mathbf{y},e_{(\widetilde{\mathbf{s}},u)}(\mathbf{x})),
\]
where $e_{(\widetilde{\mathbf{s}},u)} = \mathrm{expr}((\widetilde{\mathbf{s}},u),\mathcal{D})$ with $(\widetilde{\mathbf{s}},u)$ representing the concatenation of sequence $\widetilde{\mathbf{s}}$ and token $u$. When the length of $\widetilde{\mathbf{s}} = m$, we denote this problem as $\operatorname{LastTokenPrediction}(m)$.
\end{definition}

For the analysis, we assume a bounded-depth log-precision Transformer as in \citep{feng2023towards, merrill2023parallelism, merrill2023logic, strobl2023average}, a realistic setting where the intermediate computation values of the Transformer are limited to $O (\log k)$ bit precision, with $k$ denoting the maximal length of the input sequence.
We now present the theoretical result stating that Transformers with bounded size cannot solve the last-token prediction problem.

\begin{theorem}
\label{Thm:theorem1}
Assume $\mathsf{TC}^0 \neq \mathsf{NC}^1$. For any integer $D$ and any polynomial $Q$, there exists a problem size $m$ such that  no log-precision Transformer defined in Section \ref{Ssec:theory_preliminary} with depth $D$ and hidden dimension $d \leq Q(m)$ can solve $\operatorname{LastTokenPrediction}(m)$.
\end{theorem}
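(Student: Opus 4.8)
I would argue by contradiction, in the style of other circuit-complexity lower bounds for Transformers \citep{feng2023towards, merrill2023parallelism}, combining two ingredients. \emph{(i) An upper bound on Transformers.} A bounded-depth, log-precision encoder--decoder Transformer of the type fixed in Section~\ref{Ssec:theory_preliminary} computes, on an input of length $k$, a function in $\mathsf{TC}^0$: its forward pass consists of a constant number of layers, each a composition of $O(\log k)$-precision attention and feed-forward maps, and each such map is realizable by a constant-depth, $\poly(k)$-size threshold circuit \citep{merrill2023parallelism}. So if, for \emph{every} input size $m$, there were a depth-$D$, width-$\le Q(m)$ log-precision Transformer solving $\operatorname{LastTokenPrediction}(m)$, that family of circuits would place $\operatorname{LastTokenPrediction}$ in $\mathsf{TC}^0$ (with $Q$ polynomial, the sizes stay polynomial). \emph{(ii) Hardness of the problem.} I would then give an $\mathsf{AC}^0$ many-one reduction from the Boolean Formula Value Problem (BFVP), which is $\mathsf{NC}^1$-complete under $\mathsf{AC}^0$ reductions, to $\operatorname{LastTokenPrediction}$. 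Chaining (i) and (ii) yields $\mathsf{NC}^1\subseteq\mathsf{TC}^0$, hence $\mathsf{TC}^0=\mathsf{NC}^1$, contradicting the hypothesis; therefore for each $D$ and each polynomial $Q$ there must be a ``bad'' size $m$.

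\textbf{The reduction (the crux).} Given a Boolean formula $\phi$ on variables $b_1,\dots,b_r$ together with an assignment $\mathbf a$, I would first transcribe $\phi$ gate-by-gate into an arithmetic expression tree over $\{+,-,\times\}$ via the standard identities $b_i\mapsto x_i$, $\neg g\mapsto x_{\mathrm{one}}-g$, $g\wedge g'\mapsto g\times g'$, $g\vee g'\mapsto g+g'-g\times g'$; on $\{0,1\}$-valued inputs this tree evaluates to the truth value $v\in\{0,1\}$ of $\phi$. I would wrap it in a sign gadget $\sigma:=x_{\mathrm{two}}\times(\text{formula})-x_{\mathrm{one}}$, which evaluates to $2v-1\in\{-1,+1\}$ once the constructed dataset pins $x_{\mathrm{one}}\equiv 1$ and $x_{\mathrm{two}}\equiv 2$, and let $\widetilde{\mathbf s}$ be the prefix serialization of $\sigma\times\langle\text{leaf}\rangle$ with its final (leaf) token removed, so that $e_{(\widetilde{\mathbf s},u)}(\mathbf x)=(2v-1)\cdot(\text{value of leaf }u)$. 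The dataset $\mathcal D$ has two points: on both, $x_1,\dots,x_r$ take the assignment $\mathbf a$ and the reserved variables take $1$ and $2$; two fresh probe variables $x_a,x_b$ take values $(5,-5)$ on point~1 and $(7,-7)$ on point~2; all remaining variables are $0$; and the targets are $y_1=5,\ y_2=7$. Taking $\mathcal L$ to be the mean squared error, filling the hole with $x_a$ produces the vector $(5(2v-1),7(2v-1))$ and with $x_b$ produces $(-5(2v-1),-7(2v-1))$, so exactly one of the two attains loss $0$ --- $x_a$ iff $v=1$ and $x_b$ iff $v=0$ --- whereas every other variable leaf, and the constant placeholder $C$ even with its optimally fitted value, yields a constant output vector and hence loss $\ge 2$. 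Thus $\operatorname{LastTokenPrediction}$ returns $x_a$ precisely when $\phi$ is true, the map $\phi\mapsto(\widetilde{\mathbf s},\mathcal D,\mathcal L)$ is a local transcription computable in $\mathsf{AC}^0$, and a size-$s$ BFVP instance maps to an instance of size $m=\poly(s)$; composing with the circuit family from (i) gives a $\mathsf{TC}^0$ algorithm for BFVP. (Uniformity bookkeeping is routine, exactly as in the cited works.)

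\textbf{Main obstacle.} The delicate part is the gadget design, not the circuit-complexity accounting. I must arrange that the \emph{identity} of the loss-minimizing leaf --- not merely the attained loss --- encodes the Boolean answer in \emph{both} cases $v=0$ and $v=1$. A false formula encoded as the value $0$ would annihilate the hole's multiplicative contribution and leave all leaves tied, which is exactly why a $\pm1$ sign gadget, rather than a $\{0,1\}$ indicator, is required; and the constant token $C$ can fit any single target exactly, which is why two data points with mutually inconsistent targets are needed to demote it below the ``correct'' variable. A convenient by-product is that every intermediate value stays $O(1)$ (the Boolean sub-formula never leaves $\{0,1\}$), so log precision is never a binding constraint and the impossibility is genuinely about bounded circuit depth.
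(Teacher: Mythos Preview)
Your high-level strategy is exactly the paper's: upper-bound log-precision Transformers by $\mathsf{TC}^0$ via \citet{merrill2023parallelism} and reduce from the $\mathsf{NC}^1$-complete Boolean formula value problem. The paper's gadget differs only in detail---it prepends a $+$ rather than multiplying by a $\pm 1$ sign, and gets by with just two variables $x_1,x_2$ taking values $(1,0)$ and $(0,-1)$ on two data points with targets $(1,0)$, so that the formula value $v\in\{0,1\}$ plus one of $x_1,x_2$ matches the targets exactly---but the underlying idea is the same.

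There is, however, a real gap in your $\vee$ encoding. The identity $g\vee g'\mapsto g+g'-g\times g'$ \emph{duplicates} both subformulas; on a left-deep chain of $k$ nested $\vee$'s the resulting arithmetic expression has size $\Theta(2^k)$ while the input has size $\Theta(k)$, so your map is not even polynomial-size, let alone ``local'' or $\mathsf{AC}^0$. The paper avoids this by first rewriting every $\vee$ via De~Morgan, $(g\vee g')\mapsto\lnot(\lnot g\wedge\lnot g')$, which \emph{is} a local string edit (each $\vee$ becomes $\wedge\lnot$, and the enclosing left bracket becomes $\lnot(\lnot$, located using the $\mathsf{TC}^0$ bracket-matching routine of \citet{feng2023towards}) with no duplication, and only afterwards performs the purely local $\{0,1,\wedge,\lnot\}\to\{\times,-\}$ transcription. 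Your argument is easily repaired by adopting this preprocessing (or by starting from $\vee$-free BFVP). A minor arithmetic slip as well: with targets $(5,7)$, any constant output vector $(a,a)$ achieves MSE at least $1$ (attained at $a=6$), not $\ge 2$; the conclusion is unaffected since the winning leaf still scores $0$.
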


We show the above theorem by leveraging circuit complexity theory. Specifically, $\mathsf{TC}^0$ and $\mathsf{NC}^1$ are types of circuit complexity classes, and it is generally conjectured that $\mathsf{TC}^0 \subsetneq \mathsf{NC}^1$.
Prior work \citep{merrill2023parallelism} shows that log-precision Transformers can be simulated with $\mathsf{TC}^0$ circuits. We provide a proof for the above theorem by showing that the complexity of the last-token prediction problem is lower bounded by $\mathsf{NC}^1$.
Detailed specifications of the problem setting and proof of the theorem are provided in Appendix \ref{Sec:theory}. 

Although the final token of a mathematical expression is arguably the easiest to predict among its components, the above theorem shows that even this seemingly simple task presents substantial difficulties for Transformer models. 
In general, we find that when Transformers generate tokens via a single forward pass, they lack the capacity to account for numerical consistency, making it questionable whether auto-regressive generation is an appropriate paradigm for symbolic expression generation.

\section{Exploring Reproduction Bias in NSR}\label{Sec:reproduction_bias}

When generating expressions in an auto-regressive manner, a seemingly appropriate strategy would be to compositionally produce the next token that maximizes accuracy, conditioned on both the previously generated partial expression and the numerical data. However, our theoretical analysis from the last section showed that Transformers lack the ability to do so, meaning that they rely on more coarse-grained strategies. This brings us to the following question: \textit{How, in practice, does a transformer generate expressions during inference?} In this section, we empirically analyze how expressions are actually generated by NSR models.
We demonstrate that NSR models primarily rely on reproduction—that is, they tend to generate expressions by directly copying those seen in the training data.

\subsection{Reproduction Bias in Simplified Setting} \label{Ssec:experiment_simple}

We first tested how expressions are generated in NeSymReS, which is the method that we mainly focus on in this study. We examined whether the expressions generated by NeSymReS are merely copies from the training data or newly constructed formulas generated compositionally by the model.

\paragraph{Definition of reproduction bias.}
We first describe how the novelty of expressions are measured in this section.
We introduce 
\(
\operatorname{seq}^{-1} : \Gamma^{\ast}\!\longrightarrow\!\mathcal E
\),
which is the inverse function of \(\operatorname{seq}\) defined in Section \ref{Ssec:data_dist}, and 
\(
\operatorname{strip} : \mathcal E\!\longrightarrow\!\mathcal E
\), 
which is a function that removes all constants from the input expression.
We also define $E_{\operatorname{templ}}$ as the set of all $e_{\operatorname{templ}}$ contained in the training dataset.

\begin{definition}[Reproduction bias]
Given an output token sequence \(\mathbf{s}\), let \(e = \operatorname{seq}^{-1}(\mathbf{s})\) be the original expression that is represented by \(\mathbf{s}\). 
If \(\operatorname{strip}(e) \in E_{\operatorname{templ}}\), we say \(\mathbf{s}\) is a \emph{reproduction} of expressions seen during training.
\end{definition}

\begin{remark} \label{Rem:reproduction_bias}
We have defined and measured reproduction bias based on whether the training dataset contains an expression that is structurally equivalent to the generated one. However, one may argue that we should define and measure reproduction based on functional equivalence; there are many expressions that are structurally different but functionally equivalent (e.g., $x_1(x_1+x_2)$ and $x_1^2 + x_1 x_2$), and that such expressions should also be considered as equivalent expressions. To account for both structural and functional equivalence, we choose the operators as described below, so that any pair of structurally different expressions is also functionally different. For a more detailed discussion of the definition of reproduction bias, please refer to Appendix \ref{Sec:definition_reproduction_bias}.
\end{remark}

\paragraph{Method.}

We constructed a simplified training dataset consisting of $100$K equations. 
The allowed operator tokens were \texttt{add}, \texttt{sub}, \texttt{sin}, \texttt{cos}, \texttt{tan}, and \texttt{exp}, with up to $5$ independent variables per equation. We then trained a NeSymReS model on this dataset for $1,000$ epochs.
The variation of operators was limited due to two reasons. 
Firstly, the complicated training procedure of NeSymReS, where expressions with operators such as \texttt{mul} or \texttt{pow} are dynamically transformed and presented in different forms across epochs, makes it difficult to judge whether the model’s generated expressions are novel or memorized from training. Therefore, such operators were excluded.
Secondly, as explained in Remark \ref{Rem:reproduction_bias}, the choice of operators were restricted in order to account for both structural and functional equivalence when measuring reproduction bias.
The dataset size was also kept relatively small due to computational cost and to balance the size of the training data against the size of the search space.

For evaluation, we constructed two test datasets: \texttt{not\_included} and \texttt{baseline}, each containing $150$ expressions. 
For the \texttt{not\_included} set, we removed all $e_{\mathrm{templ}}$ appearing in the training data. In contrast, the \texttt{baseline} set was sampled directly from the generator $p_{\mathcal{E}}$ without any filtering.
We associated each expression with $100$ data points, generated in the same way as during training. We set the beam size to $5$ for this experiment.

To evaluate fitting performance, we used the $R^{2}$ score, defined as follows. Given a test equation, a set of $n$ data points $\left\{\mathbf x_{i}, y_{i}\right\}_{i=1}^{n}$, and the corresponding model predictions $\left\{\hat{y}_{i}\right\}_{i=1}^{n}$, the $R^{2}$ score is computed as:
\[
R^{2}=1-\frac{\sum_{i=1}^{n}\left(y_{i}-\hat{y}_{i}\right)^{2}}{\sum_{i=1}^{n}\left(y_{i}-\bar{y}\right)^{2}} \quad \text { where } \quad \bar{y}=\frac{1}{n} \sum_{i=1}^{n} y_{i}.
\]
Note that these $m$ evaluation points are distinct from the inputs provided to the model at the test time. By definition, $R^{2} \leq 1$, and values closer to $1$ indicate that the predicted outputs closely match the true equation. In our experiments, we counted the number of predictions whose $R^2$ exceeds the thresholds of $0.5, 0.9, 0.95, 0.99, 0.999, 0.9999$, and $0.9999$, respectively. This allows us to assess the model’s ability to fit the data under both moderate and stringent accuracy requirements.

\paragraph{Result.}
Figure \ref{Fig:baseline_vs_not_included} shows the results for NeSymReS under the simplified setting. The left figure demonstrates NeSymReS’s ability to generate novel expressions using the \texttt{not\_included} dataset. As this dataset comprises instances of $e_{\mathrm{templ}}$ unseen in the training data, the model is expected to produce previously unseen tree structures.
However, the result indicates that NeSymReS struggles to generate expression trees beyond the training data across varying epochs. After $1000$ epochs of training, over 97\% of the generated expression trees were direct copies from the training data, highlighting strong reproduction bias and revealing that the search space of NeSymReS is severely restricted. The right figure demonstrates how this reproduction bias negatively affects numerical accuracy, where NeSymReS’s fitting performance on the \texttt{not\_included} and \texttt{baseline} datasets is compared. The results indicate a substantial drop in performance for expressions whose tree structures are not present in the training data, compared with those sampled randomly. 
This suggests that for expressions not seen during training, the model's reproduction bias directly leads to poor numerical accuracy. This result also helps explain why NSR methods often fail to achieve high performance on expressions with many input variables; an increase in the number of input variables leads to an expanded search space, thereby increasing the likelihood that a given expression is absent from the training set.

\begin{figure}[!t]
  \begin{center}
  \centerline{\includegraphics[width=\columnwidth]{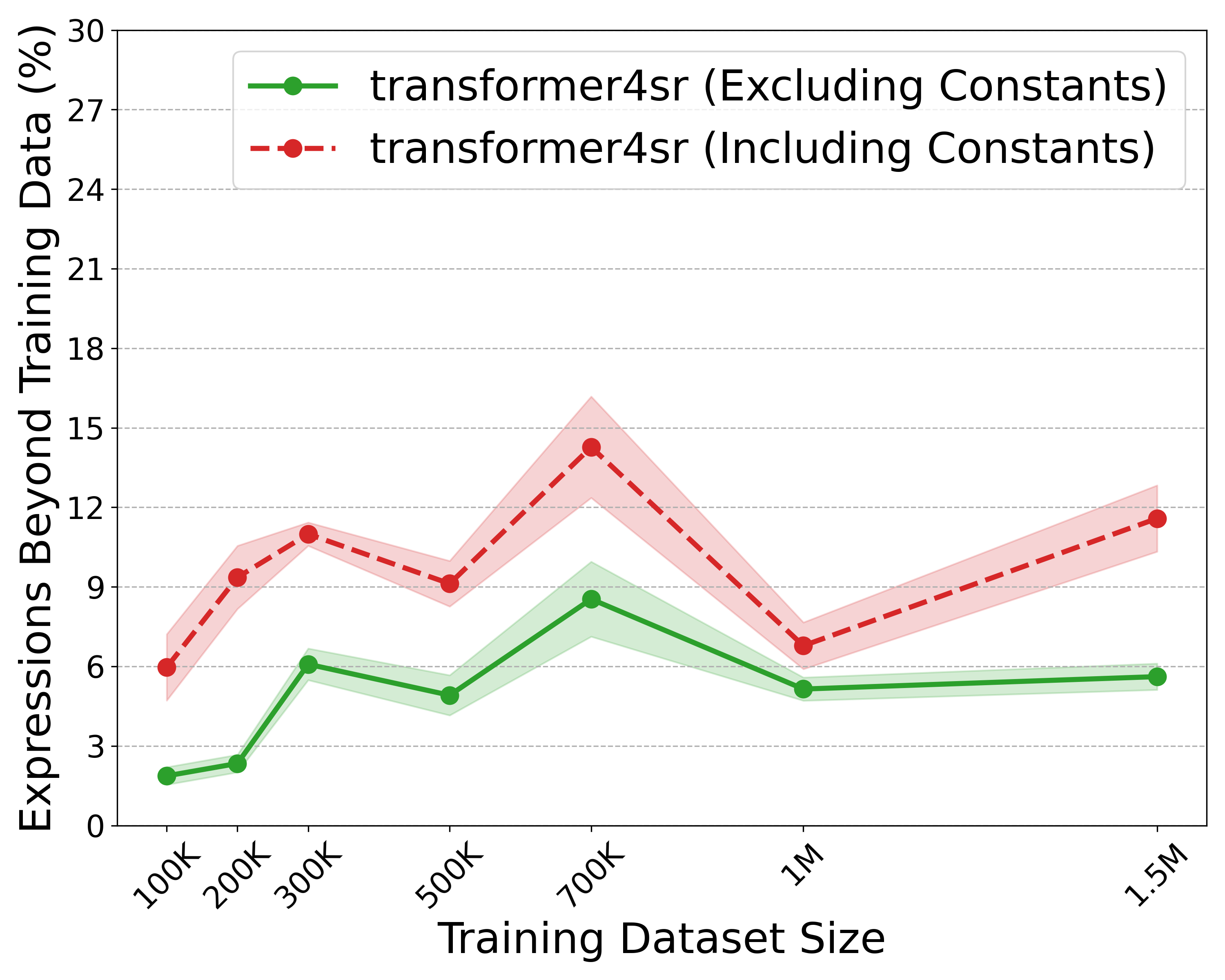}}
  \caption{Reproduction bias in transformer4sr with varying training dataset sizes. While small training dataset sizes (100K, 200K) exhibit
stronger reproduction bias, scaling the training dataset size does not necessarily mitigate reproduction bias after a certain limit.}
  \label{Fig:reproduction_bias_transformer4sr_size}
  \end{center}
\end{figure}

\subsection{Reproduction Bias in Practical Setting}\label{Ssec:experiment_practical}

Due to the complicated training procedure of NeSymReS, the above analysis was carried out in a simplified setting. To examine whether reproduction bias is a general phenomenon, we conducted an additional analysis in a more practical setting using transformer4sr \citep{lalande2023transformer}, a method similar to NeSymReS but with a simpler training process. In transformer4sr, no dynamic transformations of expressions are applied during training, which makes it much easier than in NeSymReS to verify whether the generated expressions are included in the training data. We were also able to analyze the novelty of the expressions by explicitly taking into account the positions of the constant placeholder tokens. Consequently, the definition of reproduction bias slightly differs from that described in Section \ref{Ssec:experiment_simple}, and we therefore provide additional clarification in Appendix \ref{Sec:definition_reproduction_bias}.

We followed the model architecture, number of epochs, operator selection, and inference strategies described in \citet{lalande2023transformer}. We constructed training datasets with varying size, with the largest one consisting of $1.5$M expressions based on the practical setting used in the original paper.
We used the full set of operator tokens, which are \texttt{add}, \texttt{mul}, \texttt{cos}, \texttt{log}, \texttt{exp}, \texttt{neg}, \texttt{inv}, \texttt{sqrt}, \texttt{sq} (squared), \texttt{cb} (cubed), and the number of independent variables was $6$. We constructed a test set similar to \texttt{not\_included} in the previous analysis, which consists of $300$ expressions that were not included in the training data.

Figure \ref{Fig:reproduction_bias_transformer4sr_size} shows the result for transformer4sr's ability to generate expressions beyond the training data. Even in a practical setting with $1.5$M training expressions, less than $12\%$ of the expressions generated by transformer4sr were novel expressions (taking into account the position of the constant placeholder tokens) beyond the training data, and less than $6\%$ of the expressions had novel tree structures (excluding constants). The result also demonstrates that the effect of scaling training data on mitigating reproduction bias tends to saturate, suggesting that reproduction bias is likely to persist even when the training data size is further increased.

\section{Can Test-time Strategies Mitigate Reproduction Bias?}\label{Sec:test_time_computation}

The results from the previous section indicate that the search space of NeSymReS is mostly confined to expressions seen during training due to reproduction bias. Since our theoretical analysis indicates that naively performing next-token prediction makes it difficult to generate novel expressions in a compositional manner, we investigated the possibility of devising inference-time computational techniques to reduce reproduction bias in this section.
Our hypothesis is that providing the model with hints about which tokens are appropriate could help steer the model to generate expressions that were not included in the training data. We begin by briefly introducing the three test-time strategies employed in our experiments. The detailed explanation for the strategies is presented in Appendix \ref{Sec:proposed_method_details}.

\subsection{Test-time Strategies}

\paragraph{Decoding with large beam size.} 
The beam search serves as the default decoding strategy employed by NeSymReS. During decoding, given a beam size of $b$, the decoding process generates $b$ candidate sequences via beam search. The constant placeholders of each candidate are subsequently optimized using the Broyden–Fletcher–Goldfarb–Shanno (BFGS) algorithm \citep{fletcher2000practical}. The expression exhibiting the highest numerical accuracy on the test data is then selected as the model’s output. While the experiments in Section \ref{Ssec:experiment_simple} used a beam size of $b=5$, in this section we conducted experiments with a larger beam size of $b=150$. Since increasing the beam size does not provide the model with any additional information, our hypothesis is that simply adopting a decoding strategy with a larger beam size will not alleviate reproduction bias.

\begin{figure}[!t]
  \begin{center}
  \centerline{\includegraphics[width=\columnwidth]{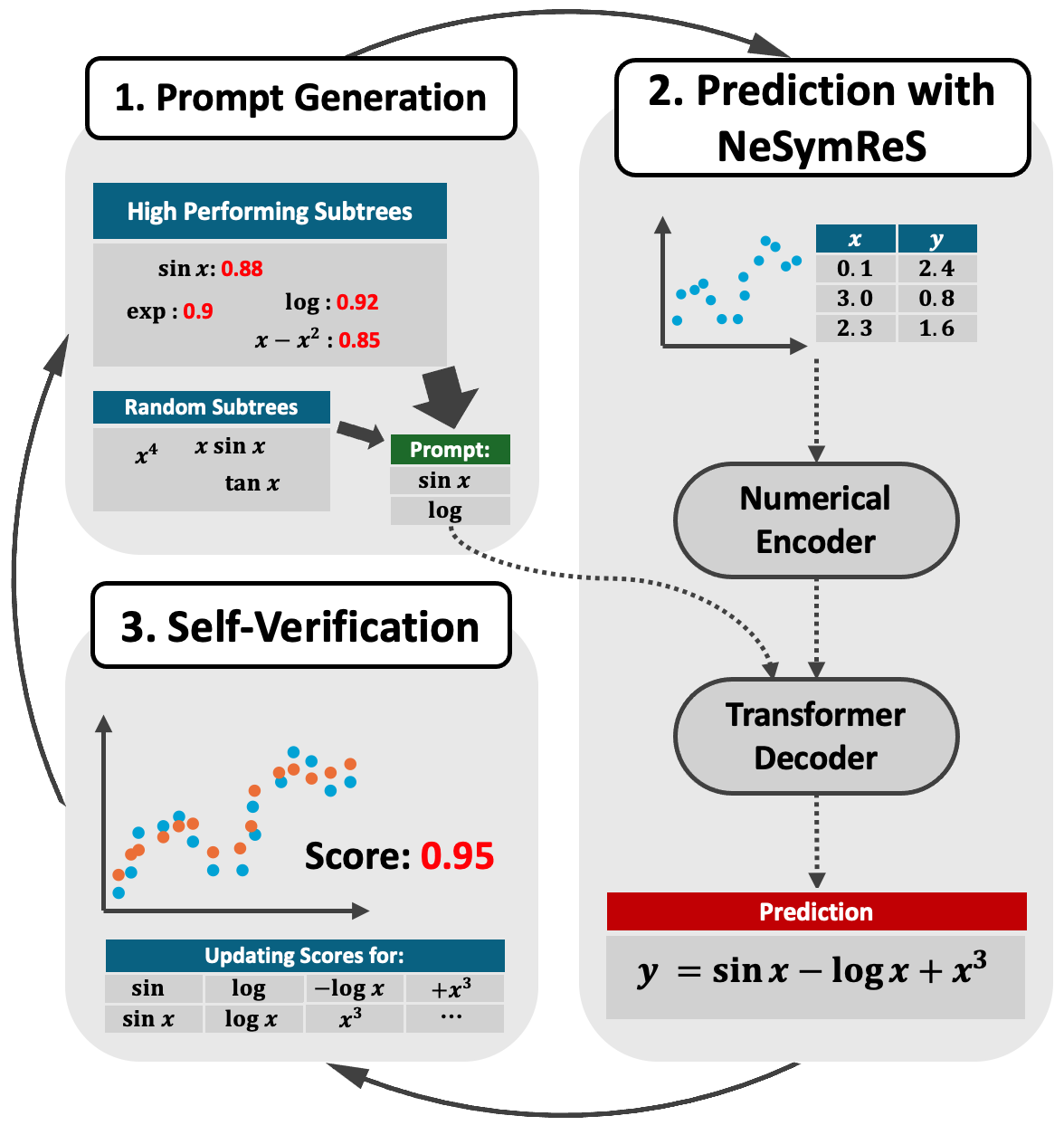}}
\caption{Overview of NSR-gvs's inference procedure. We first sampled subtrees from the candidate pool, then supplied them to the model together with numerical data.
Then, the generated prediction is numerically verified and the self-verification feedback is used to update the candidate pool. This procedure is performed repeatedly to generate better predictions over time.}
\label{Fig:proposed_method}
\end{center}
\end{figure}

\paragraph{Incorporating MCTS.}
TPSR \citep{shojaee2023transformer} is a method that leverages MCTS during decoding time.
In TPSR, the process starts by preparing a pre-trained NSR model (e.g., the NeSymReS model). Instead of relying on standard decoding methods like beam search, the method generates tokens using MCTS, where both the expansion and evaluation stages of MCTS leverage the pre-trained NSR model. In the expansion phase, to avoid unnecessary exploration, the set of expandable tokens is restricted to the top-$k_\mathrm{max}$ candidates based on the logits from the NSR model. During the evaluation phase, the NSR model first completes the remainder of the expression following the expanded token. The completed expression is then evaluated primarily based on its fitting accuracy, with additional consideration given to its complexity. In the experiments presented in this section, we used the default hyperparameter settings of TPSR as specified in the original paper; we set the number of rollouts to $r = 3$, the number of expandable tokens to $k_\mathrm{max} = 3$, and the beam size for expression completion to $b=1$.

\paragraph{NSR-gvs.}

TPSR provides feedback to the model by assigning a reward to each token, reflecting the quality or appropriateness of that token. In contrast, we hypothesized that incorporating feedback at the subtree level as well may have a positive effect on the model. To this end, we propose NSR-gvs, a method grounded in the following intuition: expressions that fit the same numerical data well are likely to share common substructures. 

We first trained a slightly modified version of the NeSymReS model, where the model takes subtrees as prompts and generates expressions that incorporate them. We achieved this by extracting subtrees from the ground-truth expressions and feeding them to the model together with numerical data during training. 
Figure \ref{Fig:proposed_method} illustrates the inference procedure of NSR-gvs. We generated multiple predictions iteratively by augmenting the model with varying prompts. For each iteration, we first sampled subtrees from a pool of candidate subtrees, which were extracted from high-performing expressions in previous predictions. To maintain output diversity, we also occasionally sampled subtrees from a random distribution. We then provided the sampled subtrees to the pre-trained model as prompts, along with the numerical data. After the model generates a prediction, it is automatically verified according to the fitting accuracy on the test data. Finally, the pool of candidate subtrees are updated based on the results of self-verification. This method can be formulated within the framework of reinforcement learning, and we provide a more detailed explanation in Appendix \ref{Sec:proposed_method_details}.

We conducted experiments in this section using $30$ iteration loops per expression, with the beam size $b = 5$ for generating each prediction. In addition, we experimented with a method that combines NSR-gvs with TPSR; in this approach, each prediction is produced via MCTS-based decoding instead of simple beam search.

\subsection{Results}

\begin{figure}[!t]
  \begin{center}
  \centerline{\includegraphics[width=\columnwidth]{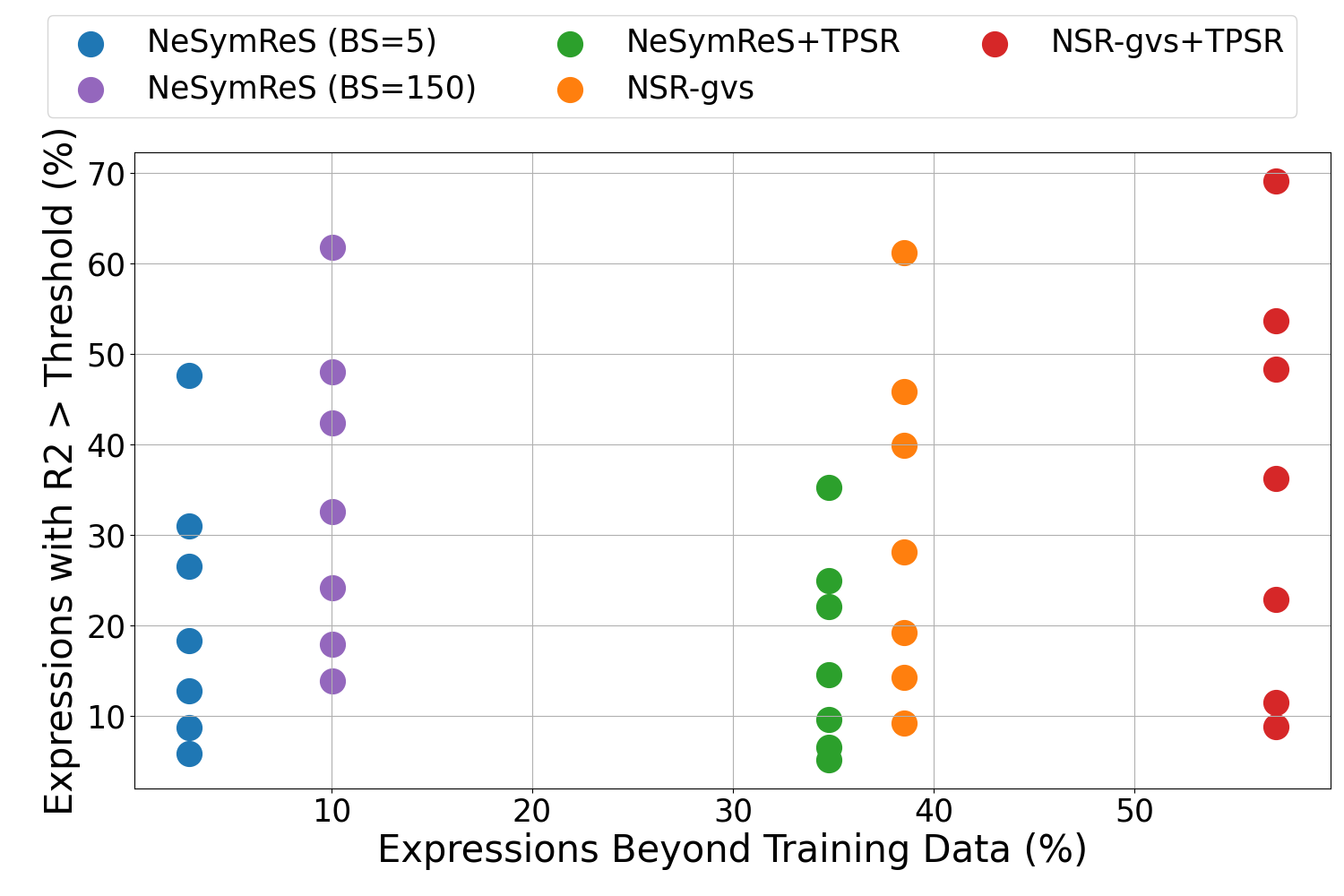}}
\caption{Evaluation of test-time strategies on the \texttt{not\_included} dataset. The x-axis represents the percentage of expressions generated that were not included in the training data. The y-axis shows the proportion of expressions that exceeded the $R^2$ thresholds of $0.5$, $0.9$, $0.95$, $0.99$, $0.999$, $0.9999$ and $0.99999$, respectively.}
  \label{Fig:controlled_scatter}
\end{center}
\end{figure}

We evaluated the impact of each test-time strategy on reproduction bias and numerical accuracy in an experimental setting. The experimental setup closely follows that described in Section \ref{Ssec:experiment_simple}. We trained a NeSymReS model and a prompt-augmented model for NSR-gvs with the same training dataset for the same number of epochs. We evaluated the strategies using the \texttt{not\_included} dataset, where we used the $R^2$ metric to evaluate numerical accuracy, and the number of novel expressions to evaluate reproduction bias.

Figure \ref{Fig:controlled_scatter} shows how the test-time strategies perform under the simplified setting. In terms of the ability to generate novel expressions, TPSR, NSR-gvs, and their combination demonstrate strong performance. These results imply that strategies involving the provision of additional information during inference (TPSR and NSR-gvs) are more effective in reducing reproduction bias. However, the result shows that high novelty in generated expressions does not necessarily imply high numerical accuracy. In some cases, acquiring the ability to generate novel expressions leads to a decrease in numerical accuracy (TPSR), whereas some strategies can improve numerical accuracy despite high reproduction bias (large beam size).
This suggests that, in some cases, Transformers struggle to leverage the additional information effectively. 
We therefore argue that providing additional information at test time in a way that is easy for the Transformer to leverage is important for developing a truly generalizable NSR approach. Viewed in this way, the use of subtrees at inference, as in the proposed method NSR-gvs, can be seen as a potentially valuable approach, since it contributes to mitigating reproduction bias and improving numerical accuracy.

\section{Conclusion}\label{Sec:conclusion}
In this work, we identified a major drawback of standard NSR models both theoretically and empirically. Our theoretical analysis shows that Transformers are incapable of generating expressions in a compositional way, while taking numerical data into account. We then examined the strategies that Transformers actually employ to generate expressions, and the results suggest that they mostly generate expressions copied from the training data, highly limiting the search space. Finally, we demonstrate that incorporating additional information to the model during test-time can reduce reproduction bias. In future work, we aim to build on the findings of this study to design symbolic regression methods that further improve generalizability.

\newpage

\bibliography{example_paper}
\bibliographystyle{icml2026}

\newpage
\appendix
\onecolumn
\appendix

\section{Details for NeSymReS} \label{Sec:detail_nesymres}
In this section, we present a detailed explanation for the study of NeSymReS that could not be fully explained in Section \ref{Sec:preliminary}. We discuss the details of the dataset generation process, the model architecture, and the training procedure.

\paragraph{Generating the dataset.} 
In the first step for generating the expression $e^\ast$, the unary-binary tree structure $T$ is generated randomly within the limits of a maximum depth of $6$. In the third step, the total number of constants added to the expression is also limited to a maximum of $6$. The binary and unary operators $\mathcal{O}_\mathrm{bin} \cup \mathcal{O}_\mathrm{un}$ are shown in Table \ref{Tab:operators}. Other hyparparameters are specified in Table \ref{Tab:hyperparams_nesymres}, where $\mathcal{LU}$ denotes the log-uniform distribution. After the expression is sampled, the symbolic manipulation library Sympy \citep{meurer2017sympy} is used to simplify any redundancy in the expression.

\begin{table}
  \caption{Operators used in NeSymReS}
  \label{Tab:operators}
  \centering
    \begin{tabular}{cc}
    \toprule
        Arity & Operators \\
        \midrule
        Unary & 
        \texttt{pow2, pow3, pow4, pow5}\\
        & \texttt{sqrt, log, exp}\\
        & \texttt{sin, cos, asin} \\
        \hline
        Binary & \texttt{add, sub, mul, div}\\
    \bottomrule
    \end{tabular}
\end{table}

\begin{table}
  \caption{Hyperparameters in NeSymReS's dataset generation}
  \label{Tab:hyperparams_nesymres}
  \centering
    \begin{tabular}{ccc}
    \toprule
        Name & Explanation & Value \\
        \midrule
        $d$ & Dimension for input variables & $5$ \\
        $n$ & Number of input points & Sampled from $\mathcal{U}(1,1000)$ \\
        $\mathcal{D}_\mathrm{mul}$ & Distribution over multiplicative constants & Sampled from $\mathcal{LU}(0.05, 10)$ \\
        $\mathcal{D}_\mathrm{add}$ & Distribution over additive constants & Sampled from $\mathcal{U}(-10,10)$ \\
        $\{x_{\mathrm{min},j}\}_{j=1}^d$ & Lower bound for sampling input variable & Sampled from $\mathcal{U}(-10,9)$ \\
        $\{x_{\mathrm{max},j}\}_{j=1}^d$ & Upper bound for sampling input variable & Sampled from $\mathcal{U}(x_{\mathrm{min},j}+1,10)$ \\
    \bottomrule
    \end{tabular}
\end{table}

\paragraph{Model architecture.}
The NeSymReS model consists of two architectural components: the numerical encoder $enc_\mathrm{num}$ and a decoder $dec$. The numerical encoder processes the numerical data $\mathcal{D}$, represented as a tensor of shape $(b,n,d)$, where $b$ denotes the batch size, $n$ the number of input points, and $d$ the sum of dependent and independent variables. First, an embedding layer converts the numerical data into a higher dimensional tensor $\mathcal{D}^{\prime}$ of shape $(b,n,h)$. This tensor is then processed by a $5$-layer set-transformer \citep{lee2019set} encoder that outputs a new tensor $Z_{num}$ of shape $(b,s,h)$, where $s$ denotes the number of embedding vectors produced by the encoder. The resulting tensor $Z_{num}$ is subsequently passed to the decoder $dec$, a five-layer standard Transformer decoder that auto-regressively generates the corresponding expression token by token. We set $b=200$, $h= 512$, and $s = 32$ for our experiments.

\paragraph{Details for training.}
During training, cross-entropy loss is used as the objective function, and teacher forcing \citep{sutskever2014sequence} is applied during next-token prediction. The AdamW \citep{loshchilov2017decoupled} optimizer is employed with an initial learning rate of $10^{-4}$. After $4000$ steps, the learning rate is adjusted proportionally to the inverse square root of the number of steps taken.

\section{Details for Test-time Strategies}\label{Sec:proposed_method_details}

This section is devoted to supplementing the details that were not fully covered in Section \ref{Sec:test_time_computation}. We first supplement our explanation of TPSR, followed by a detailed formulation of NSR-gvs.

\subsection{TPSR}

\begin{figure}[htbp]
\begin{center}
\centerline{\includegraphics[width=\columnwidth]{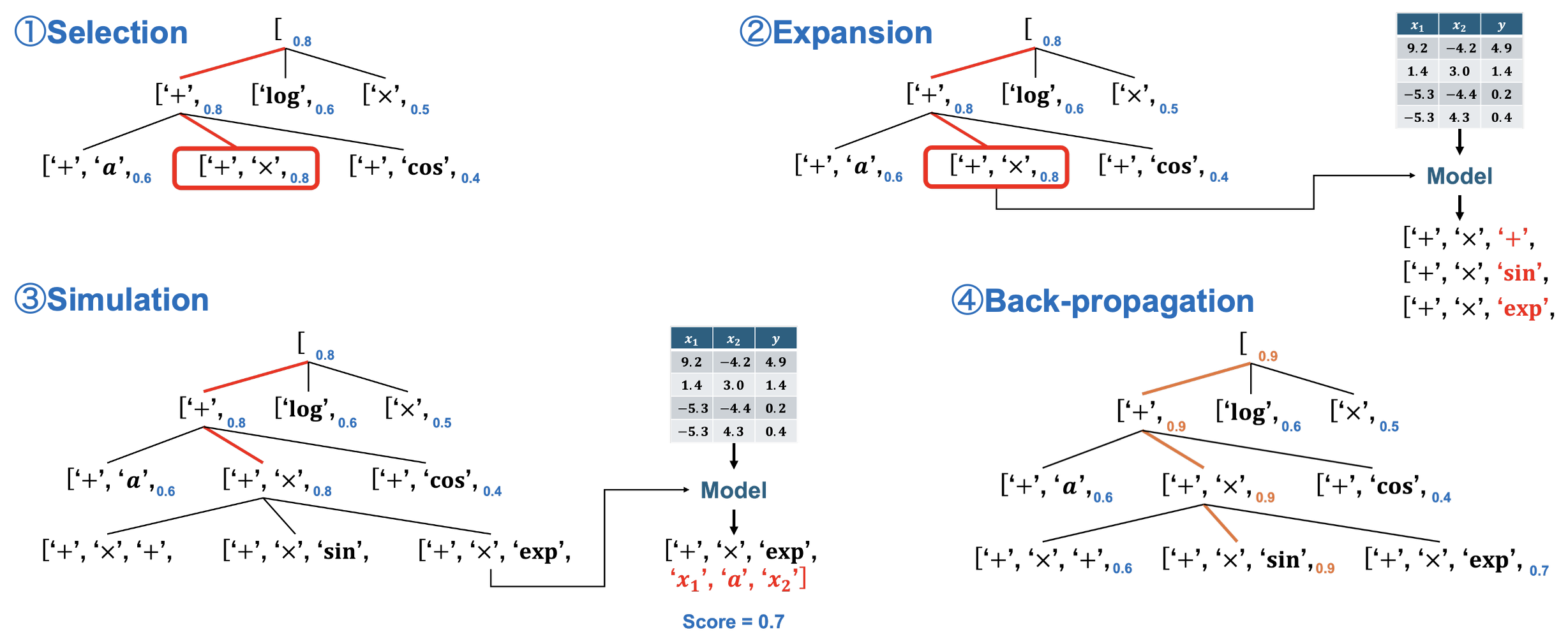}}
\caption{Overview of TPSR. Expressions are generated by MCTS with guidance from a NSR model. During the expansion phase, the next token to expand is decided based on the NSR model’s logits, and during the simulation phase, the NSR model simulates the hypotheses instead of random simulation.}
\label{Fig:tpsr_overview}
\end{center}
\end{figure}

TPSR is a method where expressions are generated by MCTS. MCTS consists of $4$ key steps, and here we supplement details for the steps that were not covered in the main text. We also provide an overview in Figure \ref{Fig:tpsr_overview}.

First, during the selection phase, the policy-guided upper confidence bound (P-UCB) heuristic \citep{silver2018general} is used, where the NSR model is used as the policy.

Then, as described in Section \ref{Sec:test_time_computation}, the NSR model decides the token to expand during the expansion phase and simulates the hypotheses during the simulation phase, playing a central role in the generation process.
After the simulation phase, the reward is mainly calculated based on the generated expression's numerical accuracy, with additional consideration given to its complexity. In TPSR, a hyperparameter $\lambda$ controls the balance between fitting accuracy and complexity. Given a set of $n$ data points $\left\{\mathbf x_{i}, y_{i}\right\}_{i=1}^{n}$, and a candidate prediction $\tilde{f}$, the reward $r(\tilde{f}(\cdot)\mid \mathbf x, \mathbf y)$ is calculated as follows:
\[
r(\tilde{f}(\cdot)\mid \mathbf x, \mathbf y) = \frac{1}{1 + \mathrm{NMSE}(\mathbf y, \tilde{f}(\mathbf x))} + \lambda \exp \left( -\frac{|\mathrm{seq}(\tilde f)|}{L} \right),
\]
where $\mathrm{seq}$ is the serialization mapping introduced in Section \ref{Sec:preliminary}, $L$ denotes the the model’s maximum sequence length, and $\mathrm{NMSE}$ represents the normalized mean square loss. In our work, we always set $\lambda$ to $0.01$, which is the default value in the original study of TPSR.

Finally, during the back-propagation phase, the scores for each node are updated by taking the maximum between the current score and the newly obtained reward value.

\subsection{NSR-gvs}
As described in Section \ref{Sec:test_time_computation}, NSR-gvs is a method that iteratively improves its predictions by providing expression subtrees as prompts to the model and receiving feedback through verification. In this section, we formulate the training and inference procedures of NSR-gvs within the framework of reinforcement learning.

\subsubsection{Training}

We first introduce a \emph{prompt-conditioned symbolic regressor} $S^\prime_{\boldsymbol\theta}$ defined by parameters $\boldsymbol\theta$, 
that maps a numerical dataset
$\mathcal{D}$ and an auxiliary prompt sequence $\mathbf p$ to a symbolic expression
$\hat e = S^\prime_{\boldsymbol\theta}(\mathcal{D},\mathbf p)$.
Learning aims to align $\hat e$ with the ground-truth expression $e^{\ast}$ underlying $\mathcal{D}$.
Among the elements of the synthetic training tuple $(e^{\ast},\mathcal{D},\mathbf p)$, the generation of $e^\ast$ and $\mathcal{D}$ is the same as explained in Section \ref{Sec:preliminary}. Here we specify how prompt sequences are constructed.

We first define $ \operatorname{extract}:\mathcal E\!\longrightarrow\!\mathcal{P}(\mathcal E)$ as a stochastic mapping, which assigns to each symbolic expression
$e \in \mathcal{E}$ a probability distribution over the subtrees of $e$. The space $\mathcal{P}(\mathcal E)$ denotes the power set of expressions.

Using this stochastic mapping, we first obtain $N$ subtrees $\{e^\prime_i \mid e^\prime_i \sim \operatorname{extract}(e^\ast), i = 1,2,\ldots,N\}$ from the ground-truth expression $e^\ast$. Then, each of the subtrees are converted to token sequences $\{\mathbf t_i \mid \mathbf t_i = \operatorname{seq}(e^\prime_i), i = 1,2,\ldots,N\}$using the serialization map $\operatorname{seq}$.
Given an expression $e^\ast$, we construct the prompt: 
\[
\mathbf{p}
  \;=\;
  (\tau_{\mathrm{start}}, \mathbf t_1, \tau_{\mathrm{end}}, \tau_{\mathrm{start}}, \mathbf t_2, \tau_{\mathrm{end}}, \ldots, 
  \tau_{\mathrm{start}}, \mathbf t_{N}, \tau_{\mathrm{end}}),
\]
where tokens $\tau_{\mathrm{start}}$ and $\tau_{\mathrm{end}}$ are partition tokens representing the beginning and end of each subtree representation.

Similar to the formulation in Section \ref{Sec:preliminary}, the predictive distribution
\(
  q^\prime_{\boldsymbol\theta}
  (\,\cdot\,\mid (\mathbf p,\mathbf s_{<j}),\mathcal D) 
\)
is realized by an encoder-decoder Transformer parametrized by $\boldsymbol\theta$. In NSR-gvs, however, the decoder is conditioned on $(\mathbf p,\mathbf s_{<j})$, which is the concatenation of the prompt $\mathbf p$ and previously emitted prefix $\mathbf s_{<j}$.

\subsubsection{Inference}
During inference, we guide the symbolic regressor $S^\prime_{\boldsymbol\theta}$ by prompting it with expression subtrees, which are obtained by a self-verification process. We formalize the inference-time mechanism of NSR-gvs within the framework of a Markov Decision Process (MDP). The core components of the MDP are defined as follows:

\paragraph{State space $\mathcal{S}$ and action space $\mathcal{A}$.}
The state at time $t$ is denoted by $s_t \in \mathcal{S}$.
The state is defined as $s_t = \{(e^\prime_i, z_i, c_i) \mid i = 1,2,\ldots,n_t\}$, which is a $n_t$-sized set comprising tuples of subtrees $e^\prime_i \in \mathcal{E}$, its corresponding \emph{verification scores} $z_i \in \mathbb{R}$, and its appearance count $c_i \in \mathbb{N}$. Therefore, the state space can be represented as $\mathcal{S} = \mathcal{P}(\mathcal{E} \times \mathbb{R} \times \mathbb{N})$.
The action $a_t \in \mathcal{A}$ is a prompt sequence described in the previous subsection. The action space is represented as $\mathcal{A}=(\Gamma \cup \{\tau_{\mathrm{start}},\tau_{\mathrm{end}}\})^\ast$.

\paragraph{Policy $\pi(a_t \mid s_t)$.}
We define a stochastic policy to sample an action $a_t$ from the current state $s_t$. An action is sampled following the procedure below.

First, we deterministically select a set of subtrees $E^\prime_{\mathrm{topk}}$, consisting of the top $k$ subtrees with the highest verification scores in state $s_t$, as follows:
\[
E^\prime_{\mathrm{topk}} \;=\; \{e^\prime_i \mid (e^\prime_i,z_i,c_i) \in s_{\mathrm{topk}},\,i= 1,2,\ldots,k\}, 
\quad \text { where } \quad s_{\mathrm{topk}} = \underset{s \subseteq s_t,\, |s| = k}{\operatorname{arg max}} \sum_{(e^\prime,z,c) \in s} z.
\]
Subsequently, we filter out subtrees whose corresponding score $z$ is smaller than a threshold value $z_\mathrm{thres}$. The purpose of this operation is to prioritize exploration over exploitation when the quality of obtained subtrees are poor.

Next, we construct a set \( E'_{\mathrm{rand}} \) by extracting \( k_{\mathrm{rand}} \) subtrees from expressions sampled from the expression generator \( p_{\mathcal{E}} \):
\[
E^\prime_{\mathrm{rand}}\;=\; \{e^\prime_i \mid e^\prime_i\sim \operatorname{extract}(e),\, e \sim p_{\mathcal{E}},\,i= 1,2,\ldots,k_\mathrm{rand}\}.
\]
Finally, we uniformly sample a set of subtrees from the merged set $E^\prime_{\mathrm{topk}} \cup E^\prime_{\mathrm{rand}}$ and convert them to tokens in the same way as during training time, resulting in a prompt sequence $a_t$. During sampling, we filter out subtrees whose token representation is longer than $l_\mathrm{max}$, and we sample subtrees until the total length of the subtrees' token representation exceeds the limit $l_t$. 

By sampling from both the self-verification-based set $E^\prime_{\mathrm{topk}}$ and the randomly obtained set $E^\prime_{\mathrm{rand}}$, the policy enables both exploration and exploitation. The hyperparameters $k$, $k_\mathrm{rand}$, $z_\mathrm{thres}$, $l_\mathrm{max}$, and $l_t$ characterize the policy.

\paragraph{Reward function $R(a_t, s_t)$ and transition probability $T(s_{t+1} \mid a_t, s_t)$.} 
After an action $a_t$ is sampled, it is provided to the prompt-conditioned symbolic regressor $S^\prime_{\boldsymbol\theta}$ together with numerical data $\mathcal{D}$. We compute the reward based on the numerical accuracy of the prediction $\hat e = S^\prime_{\boldsymbol\theta}(\mathcal{D},a_t)$.

Let $\mathcal{L}: \mathbb{R}^n \times \mathbb{R}^n \to \mathbb{R}$ be a metric to evaluate the difference between two vectors (in practice, we use the $R^2$ value described in Section \ref{Sec:reproduction_bias}). When $\mathcal{D} = \{(\mathbf x_i,\,y_i)\}_{i=1}^{n}$, the reward is computed as:
\[
R(a_t, s_t) = \mathcal{L}(\mathbf y, \hat e(\mathbf x)).
\]
Finally, we define the transition probability $T(s_{t+1} \mid a_t, s_t)$, determined by the following process. We denote by $\hat E^\prime$ the set comprising all subtree expressions of $\hat e$. For each subtree $\hat e^\prime$ in $\hat E^\prime$, we update $s_t$ so that the verification score of each subtree matches the average reward of all expressions that included the subtree, as described below.

\begin{enumerate}
\item If $\forall (e^\prime,z,c) \in s_t, \hat e^\prime \neq e^\prime$ holds, add the tuple $(\hat e^\prime, R(a_t, s_t), 1)$ to $s_t$.
\item If $\exists (e^\prime,z,c) \in s_t, \hat e^\prime = e^\prime$ holds, replace the tuple $(\hat e^\prime, z, c)$ with $(\hat e^\prime, \dfrac{cz + R(a_t, s_t)}{c + 1}, c + 1)$.
\end{enumerate}

The updated state serves as the state $s_{t+1}$ at the next timestep $t+1$.

\begin{table}
  \caption{Hyperparameters in NSR-gvs}
  \label{Tab:proposed_method_hyperparams}
  \centering
    \begin{tabular}{ccc}
    \toprule
        Name & Explanation & Value \\
        \midrule
        $k$ & Size of high-scored subtree set $E^\prime_{\mathrm{topk}}$& $39$ \\
        $k_\mathrm{rand}$ & Size of randomly sampled subtree set $E^\prime_{\mathrm{rand}}$ & $9$ \\
        $z_\mathrm{thres}$ & Threshold value for high-scored subtrees & $0.213$ \\ 
        $l_\mathrm{max} $ & Maximum length of a subtree's representation & $9$ \\
        $l_t$ & Total length of the subtrees' representation & Sampled from $\mathcal{U}(0,\lfloor15.58 + 0.42t \rfloor)$ \\
    \bottomrule
    \end{tabular}
\end{table}

The overall algorithm during inference-time is detailed in \ref{Alg:inference}. For the hyperparameters that characterize the policy, we use the values shown in Table \ref{Tab:proposed_method_hyperparams}, which were tuned via Bayesian optimization on $5$ randomly generated expressions. The function $\lfloor \cdot \rfloor$ indicates the floor function, which rounds down the input to its nearest integer.

\begin{algorithm}
\caption{Inference-time Algorithm}\label{Alg:inference}
\begin{algorithmic}

\FUNCTION{Verify($e$, $\mathcal{D}$)}
    \STATE $(X, \mathbf{y}) \gets \mathcal{D}$
    \STATE $\mathbf{\hat{y}} \gets e(X)$
    \STATE Compute $R^2$ score between $\mathbf{y}$ and $\mathbf{\hat{y}}$
    \STATE \textbf{return} $R^2$
\ENDFUNCTION
\STATE
\FUNCTION{Update($e$, $s_t$, $R^2$)}
    \STATE $E^\prime_p \gets$ Partial expressions extracted from $e$
    \STATE $s_{t+1} \gets [\,]$ 
    \FOR{$(e_p, z, c)$ in $s_t$}
        \IF{$e_p \in E^\prime_p$}
            \STATE $z \gets \dfrac{cz + R^2}{c + 1}$
            \STATE $c \gets c + 1$
        \ENDIF
        \STATE Append $(e_p, z, c)$ to $s_{t+1}$
    \ENDFOR
    \STATE \textbf{return} $s_{t+1}$
\ENDFUNCTION
\STATE
\MAIN{NSR-gvs-Inference($\mathcal{D}$)}
    \STATE $s_1 \gets [\,]$ 
    \STATE $e_{\mathrm{best}} \gets \operatorname{None}$
    \STATE $R^2_{\mathrm{best}} \gets -\infty$
    \FOR{$t \gets 1$ to $T$}
        \IF{$t = 1$}
            \STATE $a_t \gets [\,]$
        \ELSE
            \STATE $E^\prime_\mathrm{topk} \gets$ Top $k$ expressions in $s_t$ with high score
            \STATE Filter out expressions in $E^\prime_\mathrm{topk}$ whose corresponding score $z < z_\mathrm{thres}$
            \STATE $E^\prime_\mathrm{rand} \gets$ Randomly sampled partial expressions
            \STATE $E^\prime_{\mathrm{merged}} \gets E^\prime_{\mathrm{topk}} \cup E^\prime_{\mathrm{rand}}$
            \STATE $a_t \gets$ Uniformly sampled subset from $E^\prime_{\mathrm{merged}}$, converted to tokens
        \ENDIF
        \STATE $\mathbf{s} \gets \operatorname{Transformer}(\mathcal{D}, a_t)$
        \STATE Convert sequence $\mathbf{s}$ to expression $e$
        \STATE $R^2 \gets \operatorname{Verify}(e, \mathcal{D})$
        \STATE $s_{t+1} \gets \operatorname{Update}(e, s_t, R^2)$
        \IF{$R^2 > R^2_{\mathrm{best}}$}
            \STATE $e_{\mathrm{best}} \gets e$
            \STATE $R^2_{\mathrm{best}} \gets R^2$
        \ENDIF
    \ENDFOR
    \STATE \textbf{return} $e_{\mathrm{best}}$
\ENDMAIN
\end{algorithmic}
\end{algorithm}

\newpage

\section{Details for Experiments, Implementation, and use of LLMs}\label{Sec:experiment_details}

In this section, we describe the details for the experiments conducted in Section \ref{Sec:reproduction_bias}, \ref{Sec:test_time_computation}, and \ref{Sec:additional_experiments}. We also provide details regarding our implementation and the computational resources used in our experiments.

We provide the model with $100$ data points in all experiments. We selected the range of the data support as follows: for the AI Feynman dataset, we used the support defined by the dataset itself. For all other datasets, we sampled the support range using the same procedure as used when generating the training data.
For error bars, we report the standard deviation across three different random seeds. For the method combining NSR-gvs and TPSR, however, we conducted experiments with only a single seed due to the long inference time.
Our implementation for data generation, model training, and related components is based on the original implementation of NSRwH \footnote{\url{https://github.com/SymposiumOrganization/ControllableNeuralSymbolicRegression}}. For the transformer4sr and TPSR experiments, we used the official implementation provided by the authors \footnote{\url{https://github.com/omron-sinicx/transformer4sr}}, \footnote{\url{https://github.com/deep-symbolic-mathematics/TPSR}}. For both implementations, we used the version of the implementation that was available on May 15. 2025. We trained and tested the model on a single NVIDIA A100 GPU. Training requires approximately 24 hours either for $1000$ epochs on a dataset with $100,000$ expressions or for $10$ epochs on a dataset with $10$ million expressions. The time required to generate a single expression at test time is less than one minute when using only NeSymReS or NSRwH, approximately $3$ to $10$ minutes with TPSR or NSR-gvs, and around $2$ to $5$ hours when combining TPSR with NSR-gvs.

We used large language models (LLMs) to aid writing and coding, where we mainly used Gemini 2.5 Flash and GPT-5 to generate code and check on errors in writing.

\section{Discussion Concerning the Definition of Reproduction Bias}
\label{Sec:definition_reproduction_bias}
We first restate the definition of reproduction bias introduced in Section \ref{Ssec:experiment_simple}. This definition was introduced to discuss the novelty of expressions in the NeSymReS setting without considering constants. 

\begin{definition}[Reproduction bias] \label{Def:reproduction_bias_nesymres_appendix}
Given an output token sequence \(\mathbf{s}\), let \(e = \operatorname{seq}^{-1}(\mathbf{s})\) be the original expression that is represented by \(\mathbf{s}\). 
If \(\operatorname{strip}(e) \in E_{\operatorname{templ}}\), we say \(\mathbf{s}\) is a \emph{reproduction} of expressions seen during training.
\end{definition}

Here, we define reproduction bias in the transformer4sr setting. NeSymReS and transformer4sr are different in the way expressions are stored in the training data; in NeSymReS, constant-free expressions \(e_{\operatorname{templ}}\) are collected to form a set of training expressions \(E_{\operatorname{templ}}\), whereas in transformer4sr, full expressions including constants, denoted by \(e^*\), are stored, and the training dataset consists of the set \(E^*\) of such expressions. 

For transformer4sr, we measured expression novelty both with and without considering constants. If we consider the set 
\[
E^*_{\operatorname{templ}} = \{e^*_{\operatorname{templ}}\;|\;e^*_{\operatorname{templ}} = \operatorname{strip(e^*)}\} \quad \text {for all} \quad e^* \in E^*
\]
the definition of reproduction bias without considering constants would be the same as Definition \ref{Def:reproduction_bias_nesymres_appendix}. When considering constants, the definition of reproduction bias would be slightly changed as follows:

\begin{definition}[Reproduction bias considering constants] \label{Def:reproduction_bias_transformer4sr_appendix}
Given an output token sequence \(\mathbf{s}\), let \(e = \operatorname{seq}^{-1}(\mathbf{s})\) be the original expression that is represented by \(\mathbf{s}\). 
If \(e \in E^*\), we say \(\mathbf{s}\) is a \emph{reproduction} of expressions seen during training.
\end{definition}

Next, we discuss the problem of structural equivalence and functional equivalence in the transformer4sr setting. While we have defined reproduction bias based on structural equivalence, one may argue that we should define and measure reproduction based on functional equivalence. For example, $x_1(x_1+x_2)$ and $x_1^2 + x_1 x_2$ are different expressions from a structural perspective, but they are functionally equivalent. We stated in Remark \ref{Rem:reproduction_bias} that for the NeSymReS setting, structural equivalence and functional equivalence can be treated the same due to the operator choices (there is no possibility that, for example, $x_1(x_1+x_2)$ is generated when $x_1^2 + x_1 x_2$ is included in the training dataset). In the transformer4sr setting, this is not always the case, since we do not restrict the operators in this setting. 

However, even if we were to redefine reproduction bias from the perspective of functional reproduction bias, the claims of Section \ref{Ssec:experiment_practical} would remain unaffected. This is because every expression regarded as novel under the definition of functional reproduction bias is already also regarded as novel under the definition of structural reproduction bias. Since the claim of \ref{Ssec:experiment_practical} is that the proportion of expressions classified as novel is small for naive NSR methods, adopting the definition of functional reproduction bias would only further reduce that proportion, without altering the direction of the conclusions.

\section{Additional Related Work} \label{Sec:additional_related_work}

In this section, we describe symbolic regression methods other than NSR. Specifically, we provide explanation for methods that use GP, brute-force algorithms, and reinforcement learning.

The GP framework is a traditional and widely used framework for solving symbolic regression. The GP algorithm is a method based on evolutionary computation; initially, several mathematical expressions are formed randomly, and subsequently the expressions are ``evolved'' by operations such as recombining two expressions, mutating an expression, and eliminating inappropriate expressions \citep{burlacu2020operon, schmidt2009distilling, virgolin2019linear,cranmer2023interpretable}. 

An example of using brute-force algorithms for symbolic regression is AI Feynman \citep{udrescu2020ai, udrescu2020ai2}. In AI Feynman, neural networks were used to identify properties such as symmetry and separability within given numerical data. These properties were then used to recursively simplify the problem, ultimately reducing it to a form amenable to brute-force solutions.

\citet{petersen2019deep} proposed Deep Symbolic Regression (DSR), a method that uses reinforcement learning to tackle symbolic regression. In this approach, the authors used a recurrent neural network (RNN) to generate equations as token sequences, with the parameters that govern the selection of the token learned through reinforcement learning. Studies such as Symbolic Physics Learner \citep{sun2022symbolic} and Reinforcement Symbolic Regression Machine \citep{xu2024reinforcement} also use reinforcement learning, where Monte Carlo Tree Search (MCTS) is applied to discover expressions.

Some studies combine several approaches for symbolic regression. For example, neural-guided genetic programming \citep{mundhenk2021symbolic} integrates DSR and genetic programming (GP), while the Unified DSR Framework \citep{landajuela2022unified} combines GP, AI Feynman, DSR, linear models, and NSR.

\section{Additional Experiments} \label{Sec:additional_experiments}

\subsection{Do Novel Expressions Contribute to Improvements in Numerical Accuracy?}
In Section \ref{Sec:test_time_computation}, we saw that providing additional information to the model during inference can lead to generation of novel expressions. However, we also demonstrated that mitigating reproduction bias does not necessarily lead to better numerical accuracy. In this section, we analyzed how much the novel expressions generated under each test-time strategy (including NSRwH) contribute to improvements in numerical accuracy, and present the corresponding results in Table \ref{Tab:novelty_vs_accuracy}. ``Novel'' indicates that the generated expression does not appear in the training data, while ``Not Novel'' means it does. The values indicate the percentage of expressions that satisfy each condition.

\begin{table}
  \caption{Breakdown of generated expressions by novelty and high accuracy $(R^2 > 0.99)$ across test-time strategies}
  \label{Tab:novelty_vs_accuracy}
  \centering
    \begin{tabular}{ccccc}
    \toprule
        Test-time Strategy & Novel, & Novel, & Not Novel, & Not Novel, \\
        ~ & $R^2 > 0.99$ & $R^2 \leq 0.99$ & $R^2 > 0.99$ & $R^2 \leq 0.99$ \\
        \midrule
        NeSymReS (BS=1) & 0.45 & 2.89 & 12.92 & 83.74 \\
        NeSymReS (BS=5) & 0.67 & 2.23 & 17.59 & 79.52 \\
        NeSymReS (BS=50) & 4.23 & 2.23 & 25.17 & 68.38 \\
        NeSymReS (BS=100) & 4.45 & 2.45 & 26.72 & 66.37 \\
        NeSymReS (BS=150) & 6.25 & 3.79 & 26.33 & 63.62 \\
        \midrule
        \textbf{NeSymReS + TPSR (BS=1)} & 0.45 & \textbf{27.17} & 14.70 & 57.69 \\
        \textbf{NeSymReS + TPSR (BS=3)} & 0.67 & \textbf{29.62} & 18.04 & 51.67 \\
        \textbf{NeSymReS + TPSR (BS=5)} & 2.02 & \textbf{34.31} & 18.16 & 45.51 \\
        \midrule
        \textbf{NSR-gvs (BS=1)} & 4.91 & \textbf{38.18} & 14.05 & 42.86 \\
        \textbf{NSR-gvs (BS=3)} & 6.67 & \textbf{36.00} & 15.56 & 41.78 \\
        \textbf{NSR-gvs (BS=5)} & 8.68 & \textbf{30.51} & 19.38 & 41.43 \\
        \midrule
        \textbf{NSR-gvs + TPSR (BS=1)} & 12.75 & \textbf{44.30} & 23.46 & 19.46 \\
        \midrule
        NSRwH (Complexity, BS=5) & 2.23 & 5.80 & 15.18 & 76.78 \\
        NSRwH (Symmetry, BS=5) & 1.11 & 2.90 & 14.93 & 81.06 \\
        \textbf{NSRwH (Positives, BS=5)} & 2.46 & \textbf{24.83} & 6.71 & 66.00 \\
        NSRwH (Negatives, BS=5) & 0.89 & 4.68 & 13.81 & 80.62 \\
        \textbf{NSRwH (All, BS=5)} & 4.90 & \textbf{47.66} & 4.23 & 43.21 \\
    \bottomrule
    \end{tabular}
\end{table}

The results show that for test-time strategies that are capable of mitigating reproduction bias (strategies shown in bold), a large proportion of generated novel expressions do not perform well in terms of numerical accuracy. Especially for TPSR, hardly any of the novel expressions exhibit high numerical accuracy. This indicates the difficulty of generating appropriate expressions from an expanded search space. However, for strategies using NSR-gvs, novel expressions contribute to high accuracy to some extent, showing that additional information can be beneficial for both mitigating reproduction bias and improving numerical accuracy in some occasions.

\subsection{Trade-off Between Performance and Computational Cost}
The results in Section \ref{Sec:test_time_computation} show how the relationship between reproduction bias and numerical accuracy differ between various test-time strategies. However, test-time strategies also differ in terms of the computational cost required to generate an expression. In this section, we aim to better understand each test-time strategy by analyzing the trade-off between performance and the computational cost of expression generation. We also varied the beam size during decoding for NeSymReS, TPSR, and NSR-gvs for a more comprehensive analysis. We tested under the controlled setting described in Section \ref{Sec:reproduction_bias}, using the \texttt{not\_included} dataset as the test dataset.

To measure the computational cost, we followed the approach of \citet{shojaee2023transformer} and used the number of candidate expressions generated by the model during the generation of a single equation. For example, this value corresponds to the beam size in NeSymReS, the number of total rollouts multiplied by beam size in TPSR, and the number of iteration loops multiplied by beam size in NSR-gvs.

\begin{figure}[htbp]
\begin{center}
\centerline{\includegraphics[width=\columnwidth]{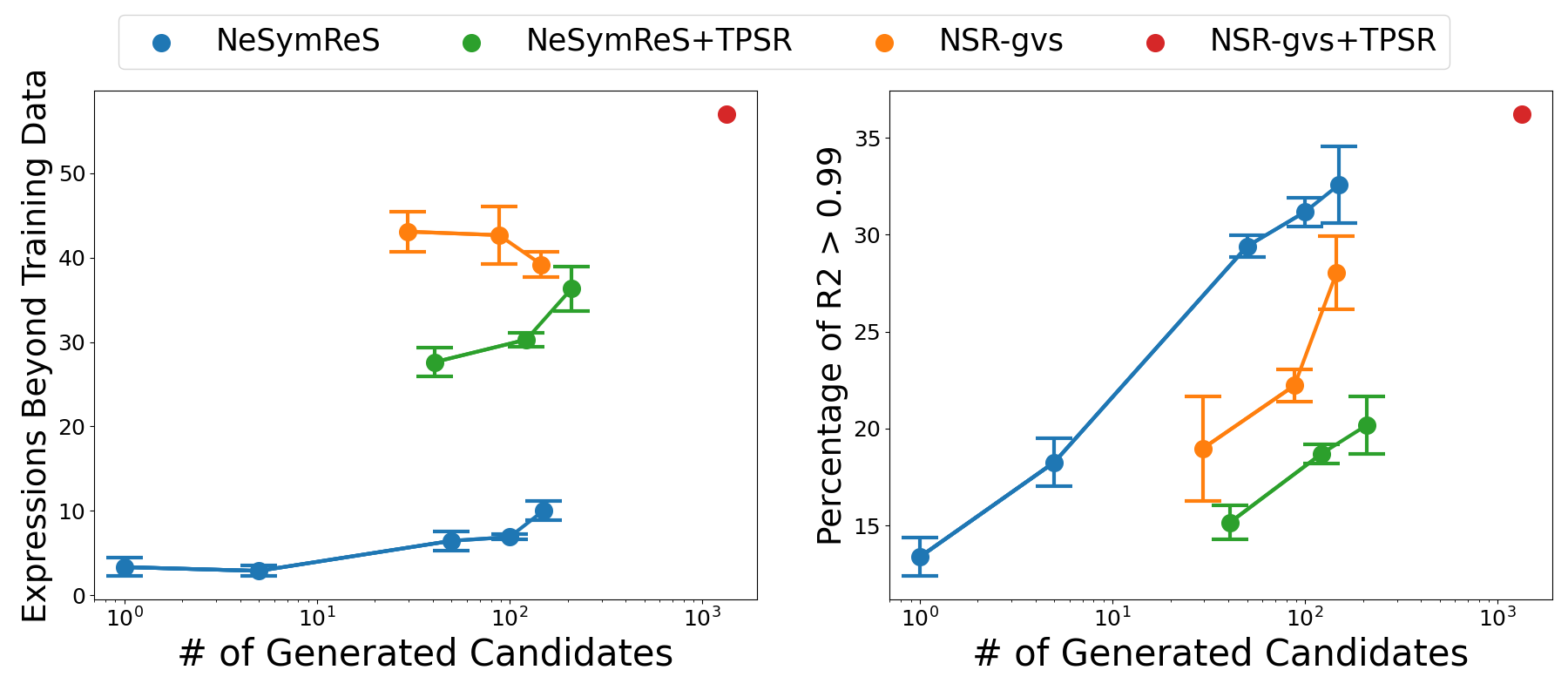}}
\caption{Trade-off between performance and computational cost for different test-time strategies. We varied the beam sizes for each model as follows: $\{1, 5, 50, 100, 150\}$ for NeSymReS, and $\{1, 3, 5\}$ for both NeSymReS+TPSR and NSR-gvs. For NSR-gvs+TPSR, we only experimented with beam size set to $1$. The left figure shows the trade-off between the ability to generate expressions and computational cost, while the right figure shows the trade-off between numerical accuracy and computational cost.}
\label{Fig:tradeoff}
\end{center}
\end{figure}

We present the results in Figure \ref{Fig:tradeoff}. It can be observed that, unlike NeSymReS—where larger beam size yields only limited reduction in reproduction bias—TPSR and NSR-gvs achieve notable reductions in reproduction bias at comparable computational costs. However, in terms of numerical accuracy, simply increasing the beam size in NeSymReS yields better performance than using NSR-gvs or TPSR at a comparable computational cost. The results support the conclusion in Section \ref{Sec:test_time_computation} that the reduction of reproduction bias is only weakly correlated with numerical accuracy.

\subsection{Can NSRwH Also Mitigate Reproduction Bias?}

\begin{figure}[htbp]
\begin{center}
\centerline{\includegraphics[width=0.5\columnwidth]{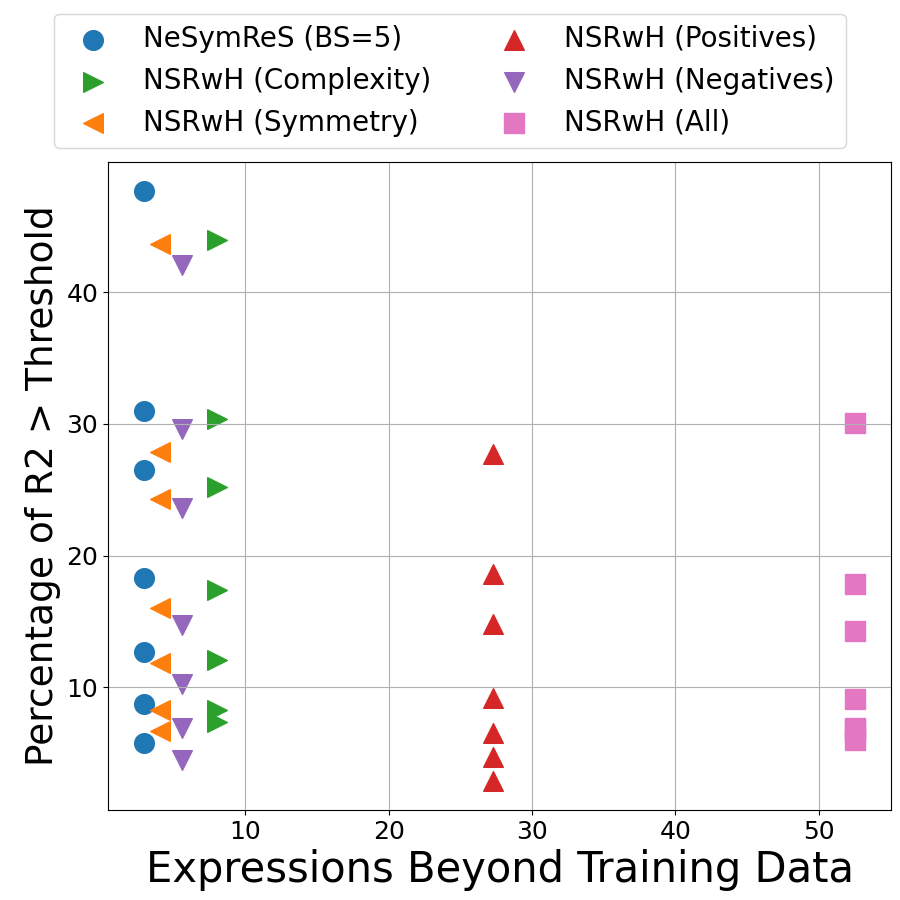}}
\caption{Evaluation of NSRwH on the \texttt{not\_included} dataset. The x-axis represents the percentage of expressions generated that were not included in the training data. The y-axis shows the proportion of expressions that exceeded the $R^2$ thresholds of $0.5$, $0.9$, $0.95$, $0.99$, $0.999$, $0.9999$ and $0.99999$, respectively.}
  \label{Fig:nsrwh_scatter}
\end{center}
\end{figure}

When researchers in fields of natural sciences or engineering model their experimental data, they often make use of prior knowledge. For example, scientists may anticipate a symmetry between variables or predict that a particular operator appears in the mathematical laws describing the data. NSRwH \citep{bendinelli2023controllable} is a method that enables incorporating such prior knowledge into the NeSymReS model. The types of prior knowledge provided to the model include the following:
\begin{itemize}
\item \textbf{Complexity.} The complexity of an expression is defined by the number of tokens used in the expression's token sequence. The model is provided with the complexity of the ground-truth expression.
\item \textbf{Symmetry.} The presence or absence of symmetry among the input variables is provided to the model.
\item \textbf{Positives.} Subtrees appearing in the ground-truth expression are provided to the model. Additionally, the value of constants appearing in the ground-truth expression may also be provided.
\item \textbf{Negatives.} Subtrees that do not appear in the ground-truth expression are provided to the model.
\end{itemize}
In NSRwH, prior knowledge is encoded by an additional symbolical encoder $enc_\mathrm{sym}$. The output of the symbolical encoder is summed together with the output of NeSymReS's numerical encoder and is fed to the decoder.

While prior knowledge is required beforehand to use NSRwH, it is a method that provides the model with additional information during inference, similar to TPSR and NSR-gvs. In this section, we test whether NSRwH can mitigate reproduction bias when prior knowledge is provided. We obtained a NSRwH model by finetuning the NeSymReS model that we trained in Section \ref{Sec:test_time_computation}. We froze the numerical encoder of the NeSymReS model, attached a symbolical encoder, and finetuned the model for $250$ epochs. We used the same training dataset as in Section \ref{Sec:reproduction_bias} consisting of $100,000$ expressions; however, during fine-tuning, prior knowledge was extracted from the ground-truth expressions and fed into the symbolic encoder. At test time, we evaluated the NSRwH model under settings where each type of prior knowledge is provided individually, as well as under a setting where all types of prior knowledge are provided simultaneously. We follow the default settings of NSRwH to determine the amount of prior knowledge provided during test-time, and we used the \texttt{not\_included} dataset as the test dataset. We set the beam size to $5$ and compare the results with those of NeSymReS, which is also configured with a beam size of $5$.

Figure \ref{Fig:nsrwh_scatter} shows the results for this experiment. While providing complexity, symmetry, or absent subtrees mitigates reproduction bias only to a limited extent, providing appearing subtrees or providing all properties significantly mitigates reproduction bias. However, we also observe that the numerical accuracy of NSRwH decreases when provided with appearing subtrees or with all properties. This indicates a limitation of NSRwH when dealing with data not included in the training set. The results also show that not all kinds of additional data are effective for mitigating reproduction bias.

\subsection{Numerical Accuracy in Practical Settings}

\begin{figure}[htbp]
\begin{center}
\centerline{\includegraphics[width=\columnwidth]{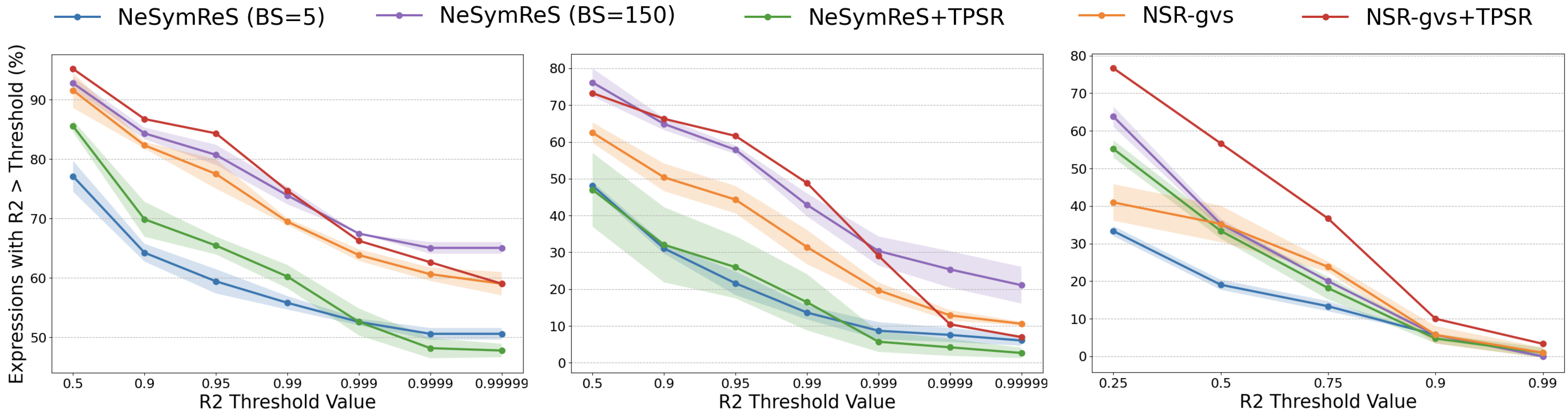}}
\caption{Comparison of test-time strategies on the \texttt{AI-Feynman} dataset. The general pattern remains consistent to the simplified setting, where large beam search and NSR-gvs+TPSR perform well, and NeSymReS+TPSR perform poorly.}
\label{Fig:performance_practical}
\end{center}
\end{figure}

Since the experiments in the main text were conducted under a relatively small training dataset with restricted operators, we additionally evaluated and compared the numerical performance of the test-time strategies under conditions that better reflect practical applications. A total of $10$ million expressions were used to construct the training dataset, employing all operators described in Section \ref{Sec:preliminary} without any restriction on operator types. We trained both a NeSymReS model and a prompt-augmented model on this dataset for $10$ epochs. For the test datasets, we prepared the following three sets:

\begin{itemize}
    \item \texttt{AI-Feynman.} This dataset consists of $91$ equations with up to five independent variables, extracted from the AIFeynman database \citep{udrescu2020ai}. It is commonly used in various studies to assess the performance of symbolic regression methods.
    \item \texttt{only\_five\_variables\_nc.} This dataset consists of expressions containing exactly five independent variables, making it a challenging dataset. The ``nc'' designation indicates that the expressions do not include constants, which simplifies the problems slightly; however, it remains more difficult than the first dataset. The dataset was constructed by sampling expressions from $p_\mathcal{E}$, filtering for expressions that include exactly five variables, and finally deleting its constants. This dataset is derived from the study of NSRwH \citep{bendinelli2023controllable}, and we use the first $100$ expressions for evaluation.
    \item \texttt{black-box}. We also evaluated on numerical data collected from the real world, whose ground-truth expressions do not exist. We extracted 35 expressions from the black-box dataset in SR-Bench \citep{la2021contemporary} whose number of independent variables are five or less. The data are often noisy and may be sampled from a range different from the numerical data that the models were trained on, making the task challenging for the test-time computation methods.
\end{itemize}

Figures \ref{Fig:performance_practical} demonstrates how the different test-time strategies perform under \texttt{AI-Feynman}, \texttt{only\_five\_variables\_nc}, and \texttt{black-box} datasets. TPSR relatively performs slightly better than in the controlled setting; however, the general pattern of numerical accuracy remains consistent. These results demonstrate that, NSR-gvs is able to improve performance in practical settings, including those with noisy data.
However, it can also be seen that even in practical settings, test-time strategies that mitigate reproduction bias do not always result in better performance.

\subsection{Varying the Number of Epochs in transformer4sr}

\begin{figure}[htbp]
\begin{center}
\centerline{\includegraphics[width=0.5\columnwidth]{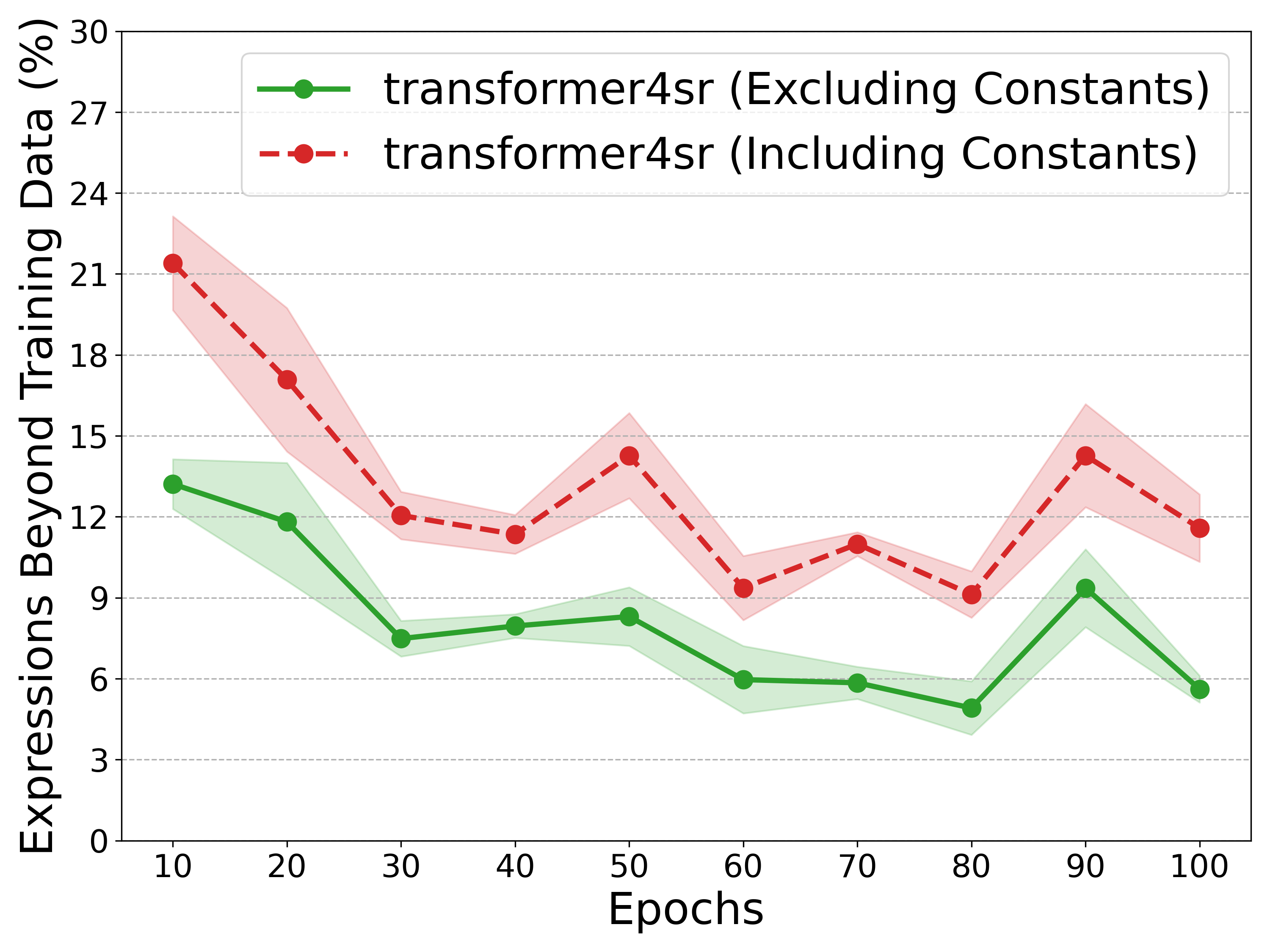}}
\caption{Reproduction bias in transformer4sr with varying epochs. Throughout the training process, the majority of both expression trees (both including and excluding constants) are copied directly from the training data.}
\label{Fig:transformer4sr_reproduction_bias}
\end{center}
\end{figure}

In Section \ref{Sec:reproduction_bias}, we tested whether reproduction bias occurs in the practical setting of transformer4sr, while varying the training dataset size.  
Here, we present the results of an analysis of reproduction bias as a function of the number of training epochs. In the experiments reported in Section \ref{Sec:reproduction_bias}, the number of training epochs was fixed to $100$ following the original paper; therefore, we evaluate reproduction bias by varying the number of epochs from $10$ to $100$ in increments of $10$. The results are shown in Figure \ref{Fig:transformer4sr_reproduction_bias}.  While reproduction bias is slightly lower at smaller numbers of epochs, it remains pronounced throughout the training process, indicating that reproduction bias cannot be attributed merely to underfitting or overfitting.

\subsection{Further Results on the \texttt{baseline} dataset}

As described in Section \ref{Sec:reproduction_bias}, the empirical results show that the \texttt{baseline} dataset is a much more easier dataset compared to the \texttt{not\_included} dataset with naive inference. Since the majority of expressions in the \texttt{baseline} dataset are expressions that were included in the training dataset, by applying test-time strategies to this dataset, we can test whether the test-time strategies help in reproducing the expressions from the training data.

In Figure \ref{Fig:vs_baseline_normal} and Figure \ref{Fig:vs_baseline_nsrwh}, we present the results concerning the numerical accuracy for various test-time strategies. We also tested with NSRwH as well as the test-time strategies described in Section \ref{Sec:test_time_computation}.
The results show that only beam search with a large beam size significantly improves performance, aligning with the previous result in Section \ref{Sec:test_time_computation} that providing the model with additional information often prevents the model from reproducing expressions from the training data.

\begin{figure}[htbp]
\begin{center}
\centerline{\includegraphics[width=0.5\columnwidth]{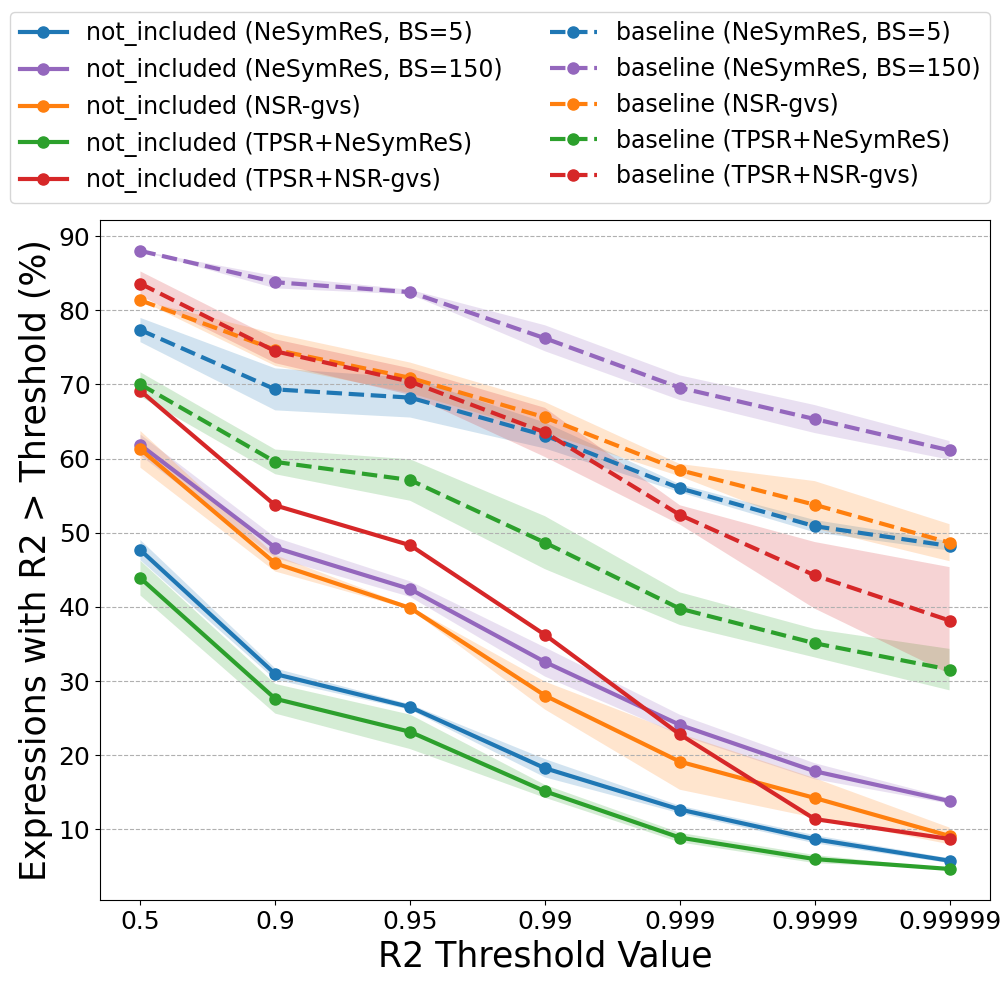}}
\caption{Numerical accuracy for various test-time strategies on the \texttt{baseline} (and \texttt{not\_included}) dataset. The y-axis shows the proportion of expressions that exceeded the $R^2$ thresholds of $0.5$, $0.9$, $0.95$, $0.99$, $0.999$, $0.9999$ and $0.99999$, respectively. Only beam search with a large beam size significantly improves performance for the \texttt{baseline} dataset.}
  \label{Fig:vs_baseline_normal}
\end{center}
\end{figure}

\begin{figure}[htbp]
\begin{center}
\centerline{\includegraphics[width=0.5\columnwidth]{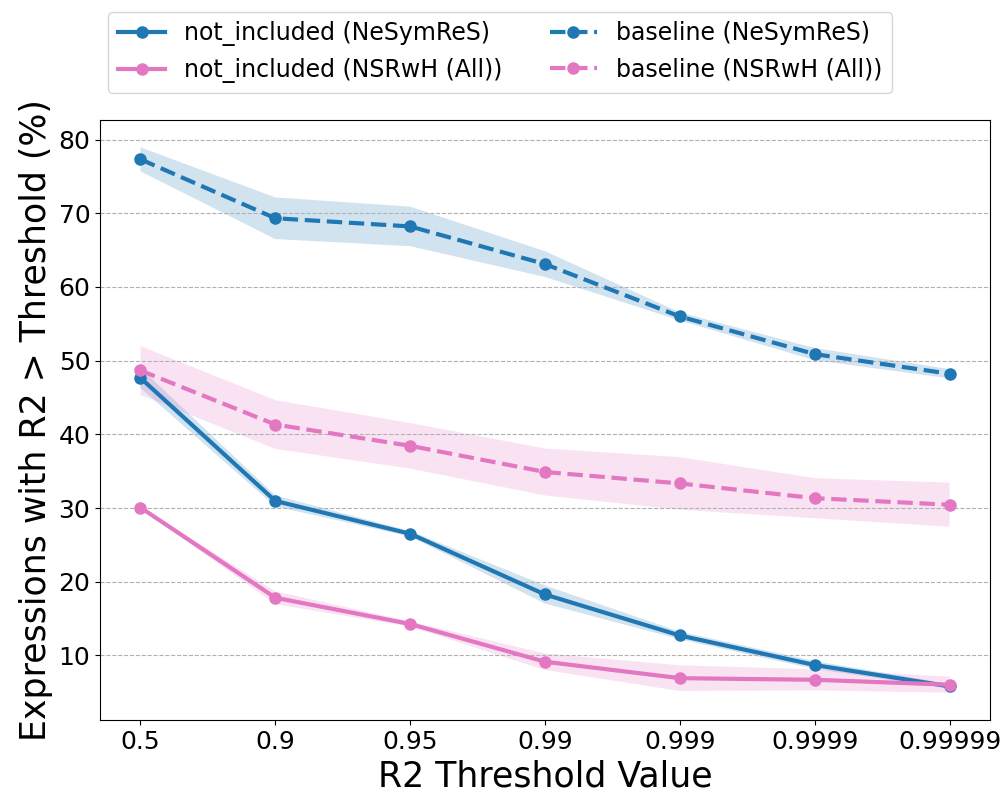}}
\caption{umerical accuracy for NSRwH on the \texttt{baseline} (and \texttt{not\_included}) dataset. The y-axis shows the proportion of expressions that exceeded the $R^2$ thresholds of $0.5$, $0.9$, $0.95$, $0.99$, $0.999$, $0.9999$ and $0.99999$, respectively. Adding additional information significantly curves the performance in the \texttt{baseline} dataset.}
  \label{Fig:vs_baseline_nsrwh}
\end{center}
\end{figure}

\newpage

\section{Theoretical Background and Proof} \label{Sec:theory}

\begin{figure}[htbp]
\begin{center}
\centerline{\includegraphics[width=\columnwidth]{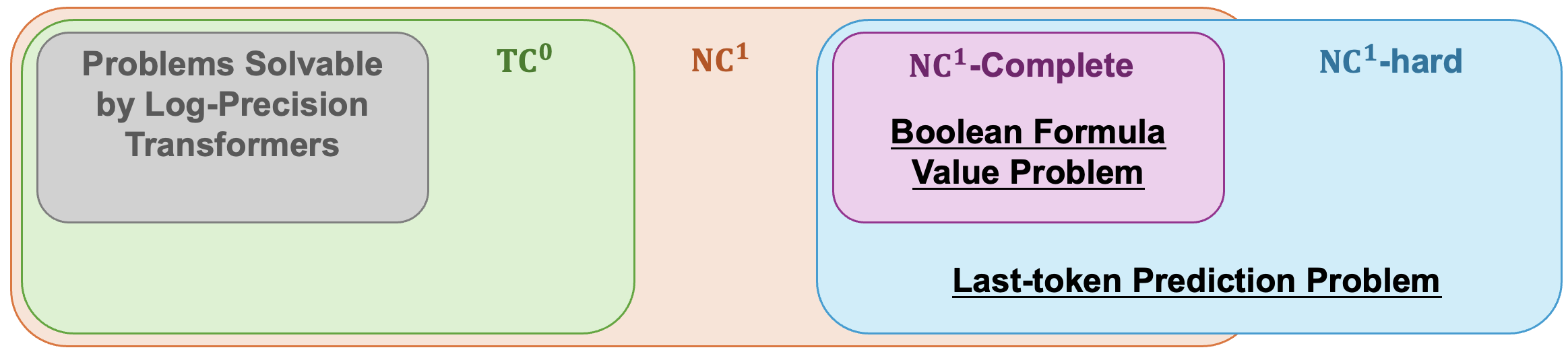}}
\caption{Overview of the theoretical analysis. When assuming $\mathsf{TC}^0 \neq \mathsf{NC}^1$, the last-token prediction problem is $\mathsf{NC}^1$-hard (due to reduction from Boolean formula value problem) while Transformers can only solve problems in $\mathsf{TC}^0$.}
\label{Fig:theory_overview}
\end{center}
\end{figure}

%%%%%%%%%%%%%%%%%%%%%%%%%%%%%%%%%%%%%%%%%%%%%%%%%%%%%%%%%%%%%%%%%%%%%%%%
%  Theorem : Bounded-depth log-precision Transformers fail
%            Last-Token Prediction
%%%%%%%%%%%%%%%%%%%%%%%%%%%%%%%%%%%%%%%%%%%%%%%%%%%%%%%%%%%%%%%%%%%%%%%%

In this section, we provide background knowledge, detailed settings, and a complete proof for the theoretical result presented in Section \ref{Sec:theoretical_analysis}.

%-------------------------------------------------
\subsection{Preliminary}\label{Ssec:theory_preliminary}
We first provide a brief overview of relevant circuit complexity classes.
We then define the class of log-precision Transformers and introduce its simulation guarantees. We also present a formal definition of the Boolean formula value problem, which we use in our proof.

\subsubsection{Circuit Complexity Classes}
We offer an explanation to several fundamental circuit complexity classes that are used in our theoretical analysis. Particularly, we discuss the complexity classes $\mathsf{AC}^0, \mathsf{TC}^0$ and , $\mathsf{NC}^1$. The relationship between these three classes can be summarized as follows:
\[
\mathsf{AC}^0 \subsetneq \mathsf{TC}^0 \subset \mathsf{NC}^1.
\]
Whether $\mathsf{TC}^0$ is a proper subset of $\mathsf{NC}^1$ is an open question, but it is widely believed that this is the case. For a more detailed and comprehensive introduction, we recommend reference to \citep{arora2009computational}.

\paragraph{Circuit class $\boldsymbol{\mathsf{AC}^{0}}\,$.}
The class $\mathsf{AC}^{0}$ consists of Boolean circuits of \emph{constant depth} and \emph{polynomial size} whose gates have unbounded fan‐in and are restricted to the basis
$\{\mathsf{AND},\mathsf{OR},\mathsf{NOT}\}$.
Intuitively, $\mathsf{AC}^{0}$ captures extremely shallow parallel
computation.

\paragraph{Circuit class $\boldsymbol{\mathsf{TC}^{0}}\,$.}
The class $\mathsf{TC}^{0}$ is an extension of $\mathsf{AC}^{0}$, where a gate called the \emph{majority gate} can be additionally used. A majority gate has unbounded fan-in and outputs false when half or more of the inputs are false, and true otherwise. Other definitions are the same as $\mathsf{AC}^{0}$. 

\paragraph{Circuit class $\boldsymbol{\mathsf{NC}^{1}}\,$.}
Circuits in $\mathsf{NC}^{1}$ are polynomial sized with the depth \emph{logarithmic} to the input size. They comprise of $\{\mathsf{AND},\mathsf{OR},\mathsf{NOT}\}$ gates with constant fan-in. The class $\mathsf{NC}^{1}$ contains several well-known problems such as the parity check on a bit string.

\subsubsection{Log-precision Transformers}
We assume bounded-depth log-precision Transformers throughout the theoretical analysis.
We first model the parametrized Transformer $\mathsf{TF}_{\boldsymbol \theta}$ as a next-token prediction function;
\begin{align}
  \mathsf{TF}_{\boldsymbol{\theta}} : \Gamma^{m}\times\mathbb R^{(d+1)\times n}
  \;\longrightarrow\;
  \Gamma,
\end{align}
i.e.\ the Transformer receives a length-$m$ prefix along with a numerical dataset~$\mathcal D$ and outputs a single token $u\in\Gamma$.

\begin{definition}[$(D,d)$-bounded log-precision Transformer]
\label{Def:boundedTransformer}
Let $k$ be the input length.
A $(D,d)$-bounded log-precision Transformer
is an encoder–decoder model that satisfies
\begin{enumerate}
\item constant depth $D=O(1)$,
\item hidden size $d\!\le\! Q(k)$ for a fixed polynomial $Q$,
\item the values at all layers, as well as the outputs of all key intermediate operations in it (attention, activation, arithmetic operators, etc.), are represented using $O(\log k)$ bits. 
\end{enumerate}
\end{definition}
For specific definitions of operations that enable approximation in $O(\log k)$ bits, please refer to Section $4$ and Appendix A of \citet{merrill2023parallelism}.
We introduce the simulation guarantees for bounded-depth log-precision Transformers as follows.

\begin{lemma}[Circuit simulation {\citep[Cor.\,2.1]{merrill2023parallelism}}]
\label{Lem:tc0-sim}
Any $(D,d)$-bounded log-precision Transformer can be simulated by a
family of $\mathsf{TC}^{0}$ circuits of size $\poly(k)$ and constant depth with respect to $k$.
\end{lemma}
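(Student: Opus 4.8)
The plan is to prove Lemma \ref{Lem:tc0-sim} by a layer-by-layer simulation that reduces the entire forward pass of a $(D,d)$-bounded log-precision Transformer (Definition \ref{Def:boundedTransformer}) to a constant-depth composition of $\mathsf{TC}^0$ circuits. The central observation, which I would foreground, is that because every intermediate value is represented in $O(\log k)$ bits, each scalar ranges over a set of size $2^{O(\log k)}=\poly(k)$, so no genuine real-number approximation is needed: the circuit only has to reproduce \emph{exactly} the fixed-precision arithmetic prescribed by the Transformer's specification. First I would fix an encoding of the $O(\log k)$-bit values onto circuit wires and note that the input — the length-$m$ prefix over $\Gamma$ together with the dataset $\mathcal D\in\mathbb R^{(d+1)\times n}$ rounded to log precision — occupies $\poly(k)$ bits.

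Next I would assemble a small library of arithmetic primitives, each shown to lie in $\mathsf{TC}^0$ on $O(\log k)$-bit operands: (i) comparison and selection, hence $\mathsf{ReLU}$ and $\operatorname{argmax}$ over $\poly(k)$ candidates, which follow directly from threshold gates; (ii) iterated addition of $\poly(k)$ numbers of $O(\log k)$ bits each, via the classic $\mathsf{TC}^0$ iterated-sum construction; and (iii) multiplication, iterated multiplication, division, and reciprocal of log-precision numbers, all known to be in $\mathsf{TC}^0$. The per-coordinate nonlinearities appearing in attention and the feedforward blocks — the exponential in the softmax, the square root in layer normalization, and any smooth activation — are, under the log-precision convention, maps from a $\poly(k)$-sized domain to a $\poly(k)$-sized range; I would invoke the log-precision-approximability framework of \citet{merrill2023parallelism} to argue that each such operation is realized by a constant-depth threshold circuit, either via explicit truncated-series arithmetic or, at worst, via a table encoded into the polynomially many gates.

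With this library in place, I would simulate a single layer. Each attention head computes query–key scores (matrix products, hence iterated sums of products), applies the softmax (exponentials, an iterated sum for the normalizer, and a reciprocal), and forms the value-weighted average, which is again an iterated sum over the $n$ positions; the residual connection, layer normalization, and the feedforward sublayer are further compositions of the same primitives. Since each aggregation runs over only $\poly(k)$ positions and the hidden size satisfies $d\le Q(k)=\poly(k)$, one layer is a composition of a constant number of $\mathsf{TC}^0$ stages, and $\mathsf{TC}^0$ is closed under constant-depth composition, so the layer is itself in $\mathsf{TC}^0$. Stacking the $D=O(1)$ layers and appending the final $\operatorname{argmax}$ over the output logits multiplies depth and size by constants, yielding a family of constant-depth, $\poly(k)$-size $\mathsf{TC}^0$ circuits, which is exactly the claim.

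The main obstacle I anticipate is the faithful, $\mathsf{TC}^0$-computable realization of the nonlinearities together with control of precision across the composition. The softmax in particular requires exponentiation, summation, and division in sequence, and one must verify that the log-precision-rounded pipeline is reproduced \emph{exactly} rather than merely approximated, and that the $O(\log k)$-bit rounding after each operation keeps every intermediate value inside the $\poly(k)$-sized domain so that the iterated-arithmetic $\mathsf{TC}^0$ routines stay applicable. This bookkeeping, rather than any single gate construction, is where the care lies; I would handle it by carrying the rounding operator explicitly through each primitive and appealing to the closure properties and the log-precision arithmetic toolbox of \citet{merrill2023parallelism}, which were designed precisely to make this composition go through.
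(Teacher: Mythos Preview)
The paper does not supply its own proof of this lemma; it simply imports the result as Corollary~2.1 of \citet{merrill2023parallelism} and uses it as a black box in Step~1 of Theorem~\ref{Thm:impossibility}. Your sketch correctly reconstructs the argument of that reference---layer-by-layer simulation via $\mathsf{TC}^0$ iterated-arithmetic primitives on $O(\log k)$-bit operands, with constant-depth composition over the $D=O(1)$ layers---so it is both sound and faithful to the source the paper relies on.
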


\subsubsection{The Boolean Formula Value Problem}
Following the definition by \citet{buss1987boolean}, we introduce the definition of the Boolean formula value problem as follows.

\begin{definition}[Boolean formula value problem]
\label{Def:booleanformulavalue}
Let $\Lambda = \{0,1,\land,\lor,\lnot,(,)\}$ be the alphabet. A Boolean formula is a string defined recursively as follows:
\begin{enumerate}
\item $0$ and $1$ are Boolean formulae;
\item If $\mathbf{t}_1$ and $\mathbf{t}_2$ are two Boolean formulae, then $(\lnot \mathbf{t}_1), (\mathbf{t}_1 \land \mathbf{t}_2), (\mathbf{t}_1 \lor \mathbf{t}_2)$ are also Boolean formulae.
\end{enumerate}
When given a boolean formula $\mathbf{t}$, the goal of the Boolean formula value problem is to compute whether the evaluation result $\operatorname{eval}(\mathbf{t})$ of a given Boolean formula is $0$ or $1$.
\end{definition}

%-------------------------------------------------
%----------------------------------------------------------------------
\subsection{Main Theorem }
%----------------------------------------------------------------------
Prior to proving the main theorem, we state the following Lemma from \citet{feng2023towards}. The detailed proof of this Lemma can be found in the same paper. The Lemma states that $\mathsf{TC}^0$ circuits are capable of identifying the indexes of paired brackets in a string.

\begin{lemma}[Bracket parsing {\citep[Lem.\,D.3]{feng2023towards}}]
\label{Lem:brackets}
Consider any string $\mathbf{t}=t_1t_2\cdots t_n$ of length $n$ containing brackets `(', `)', and other characters, and all brackets in $\mathbf{t}$ are paired. Let $\boldsymbol{g}$ be a boolean function taking $\mathbf{t}$ as input and output $n$ pairs of integers defined as follows:
\begin{equation*}
\boldsymbol{g}_i(\mathbf{t})=\begin{cases}
(-1,j) & \text{if $t_i$ is a left bracket and $t_i,t_j$ are paired.}\\
(j,-1) & \text{if $t_i$ is a right bracket and $t_i,t_j$ are paired.}\\
(j,k) & \text{if $t_i$ is not a bracket, and $t_j,t_k$ are the nearest paired brackets containing $t_i$.}
\end{cases}
\end{equation*}
Then $\boldsymbol{g}$ can be implemented by the $\mathsf{TC}^0$ circuits.
\end{lemma}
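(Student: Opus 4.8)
The plan is to exhibit an explicit uniform family of constant-depth, polynomial-size threshold circuits computing $\boldsymbol{g}$, reducing the seemingly sequential task of bracket matching to a collection of parallel arithmetic comparisons on bracket \emph{depths}. The starting observation is that matching is completely determined by a prefix-depth profile: assign weight $+1$ to each left bracket, $-1$ to each right bracket, and $0$ to every other symbol, and let $\delta_i$ be the running sum of these weights over $t_1,\dots,t_i$, so that $\delta_i$ equals the number of left brackets minus the number of right brackets among the first $i$ symbols. Each $\delta_i$ is an integer of absolute value at most $n$, hence representable in $O(\log n)$ bits.

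First I would argue that all the $\delta_i$ can be produced simultaneously in $\mathsf{TC}^0$. Computing $\delta_i$ amounts to counting occurrences of each bracket symbol in a prefix and subtracting, and counting is exactly what majority/threshold gates provide; subtraction and comparison of two $O(\log n)$-bit integers lie in $\mathsf{AC}^0 \subseteq \mathsf{TC}^0$. Since the circuit evaluates all $n$ prefixes in parallel, the entire depth profile is available after constant depth and polynomial size.

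Next I would encode the combinatorics of matching as local depth predicates. For a left bracket $t_i$, its partner is the first $j>i$ at which the depth falls back to $\delta_i-1$, which I capture by the predicate $\mathrm{match}(i,j)$ asserting that $t_i$ is a left bracket, $t_j$ is a right bracket, $\delta_j=\delta_i-1$, and $\delta_\ell\ge\delta_i$ for all $i<\ell<j$; the last clause is a single unbounded fan-in AND certifying that no earlier position returns to that level. The three cases of $\boldsymbol g_i$ then become purely parallel selections: for a left bracket, emit $(-1,j)$ for the unique $j$ with $\mathrm{match}(i,j)$; for a right bracket, emit $(j,-1)$ for the unique $j$ with $\mathrm{match}(j,i)$; and for a non-bracket, emit the innermost enclosing pair $(j,k)$, which I characterize as the unique pair with $\mathrm{match}(j,k)$, $j<i<k$, and $\delta_j=\delta_i$, the depth equality forcing that no deeper matched pair separates $j$ from $i$. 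Each predicate is a constant-depth combination of the $O(\log n)$-bit comparisons already computed, so all the $\mathrm{match}(\cdot,\cdot)$ bits sit in $\mathsf{TC}^0$. Finally, since each target index is the unique witness of its predicate, I extract it by the iterated sum $\sum_{j'} j'\cdot[\text{predicate holds at }j']$, and iterated addition of $n$ numbers of $O(\log n)$ bits is again a standard $\mathsf{TC}^0$ operation, yielding the integer outputs of $\boldsymbol g$.

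The main obstacle is not the circuit engineering but verifying the depth-based characterizations: that ``first $j>i$ with $\delta_j=\delta_i-1$'' really picks out the matching close bracket, and that $\delta_j=\delta_i$ isolates the innermost enclosing pair. Both require the balancedness hypothesis together with a short argument that within any matched span the depth never drops below the opening level. The remaining care is bookkeeping: confirming that every stage used (prefix counting, pairwise comparison, unique-witness index selection) lies in $\mathsf{TC}^0$ at depth independent of $n$ and that the family is uniform, so that composing these constantly many $\mathsf{TC}^0$ stages keeps the overall circuit inside $\mathsf{TC}^0$.
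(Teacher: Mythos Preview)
Your proposal is correct: the depth-profile reduction (prefix counts, $O(\log n)$-bit comparisons, unique-witness extraction via iterated addition) is exactly the standard route to placing bracket matching in $\mathsf{TC}^0$, and your characterizations of the matching partner and the innermost enclosing pair are sound under the balancedness hypothesis. The paper itself does not reprove this lemma at all---it simply cites \citet{feng2023towards} and defers to their Lemma~D.3---so there is no meaningful divergence to report; your construction is essentially the argument one finds in that reference.
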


We now proceed to prove the main theorem of our theoretical analysis.

\begin{theorem}[Bounded log-precision Transformer lower bound]
\label{Thm:impossibility}
Assume $\mathsf{TC}^0 \neq \mathsf{NC}^1$. For any integer $D$ and any polynomial $Q$, there exists a problem size $m$ such that  no $(D,d)$-bounded log-precision Transformer with $d\le Q(m)$ can solve $\operatorname{LastTokenPrediction}(m)$.
\end{theorem}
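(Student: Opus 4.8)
The plan is to prove the lower bound by contradiction, reducing the Boolean formula value problem (which is $\mathsf{NC}^1$-hard by a classical result) to $\operatorname{LastTokenPrediction}(m)$ for suitable $m$. Suppose, for some depth $D$ and polynomial $Q$, that for every $m$ there is a $(D,d)$-bounded log-precision Transformer with $d \le Q(m)$ solving $\operatorname{LastTokenPrediction}(m)$. By \Cref{Lem:tc0-sim}, each such Transformer is simulated by a $\poly(m)$-size, constant-depth $\mathsf{TC}^0$ circuit family. If we can show that a polynomial-time uniform $\mathsf{TC}^0$ reduction maps Boolean formula instances of size $n$ to $\operatorname{LastTokenPrediction}(m)$ instances with $m = \poly(n)$, then composing the reduction with the simulating circuit puts the Boolean formula value problem in $\mathsf{TC}^0$, contradicting $\mathsf{TC}^0 \neq \mathsf{NC}^1$.

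The heart of the argument is the encoding. Given a Boolean formula $\mathbf{t}$ over $\{0,1,\land,\lor,\lnot,(,)\}$, I would build a numerical dataset $\mathcal D$ and an incomplete prefix $\widetilde{\mathbf s}$ so that the identity of the optimal final leaf token $u^\ast$ reveals $\operatorname{eval}(\mathbf t)$. The idea is to arithmetize the Boolean connectives: represent $0,1$ by constants, $\lnot$ by $x \mapsto 1-x$, $\land$ by multiplication, $\lor$ by $x,y \mapsto x + y - xy$ (or by $\min$/$\max$ if those are in $\mathcal O_{\mathrm{bin}}$), and translate the bracket structure of $\mathbf t$ directly into a prefix expression tree over these operators, where the Boolean leaves $0,1$ become numeric constant tokens or fixed-valued leaves. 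This produces a constant-free (or constant-only) prefix token string $\widetilde{\mathbf s}$ of length $m = O(n)$ that computes, on any input, the value $\operatorname{eval}(\mathbf t) \in \{0,1\}$. The dataset $\mathcal D$ is chosen with, say, a single data point whose target $y_1$ is set so that one candidate leaf token (encoding ``answer $= 1$'') gives loss $0$ exactly when $\operatorname{eval}(\mathbf t) = 1$, and a different leaf token is optimal when $\operatorname{eval}(\mathbf t) = 0$; then $u^\ast$ decodes the formula value. One subtlety is that $\mathrm{expr}$ must constant-fit, so I would design the prefix so that no genuine constant optimization is needed — all constants are forced — making $e_{(\widetilde{\mathbf s}, u)}(\mathbf x)$ a fixed Boolean-valued function independent of the BFGS step. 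I must also check that the serialization map $\operatorname{seq}$ and the arity/parenthesization conventions let the prefix be padded into a well-formed expression by a single leaf token.

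The remaining obligations are that the reduction itself is computable in $\mathsf{TC}^0$ (or at least in a complexity class weak enough that the composition stays in $\mathsf{TC}^0$). Translating a parenthesized formula into prefix notation requires matching brackets and reordering tokens according to the parse tree, and \Cref{Lem:brackets} is exactly the tool for this: it gives $\mathsf{TC}^0$ circuits that, for each position, identify the enclosing matched bracket pair, which is enough to compute each token's position in the prefix serialization by counting (majority gates suffice for the index arithmetic). I would also need to encode the numeric constants and target $y_1$ with $O(\log m)$ bits, which is immediate since they are fixed small rationals. Finally, assembling $\mathcal D$ and $\widetilde{\mathbf s}$ as the input format expected by the Transformer is a uniform formatting step in $\mathsf{TC}^0$.

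The main obstacle I anticipate is the arithmetization step: I must verify that the chosen operator set $\mathcal O_{\mathrm{bin}} \cup \mathcal O_{\mathrm{un}}$ (and the definition of $\mathrm{expr}$ with its constant-fitting) actually admits an exact simulation of $\land, \lor, \lnot$ by constant-free expressions whose leaves are variables or forced constants — in particular, arranging that the BFGS constant-optimization inside $\mathrm{expr}$ cannot ``cheat'' by choosing constants that lower the loss for the wrong candidate token, which would break the correspondence between $u^\ast$ and $\operatorname{eval}(\mathbf t)$. Handling this cleanly may require either restricting to a candidate token set whose expressions contain no free constants at all, or adding enough data points to $\mathcal D$ that the forced-constant assignment is the unique minimizer. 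I would also need to double-check that the $\mathsf{TC}^0$ prefix-conversion carries through for the specific vocabulary $\Gamma$ and serialization convention fixed in \Cref{Ssec:data_dist}, but \Cref{Lem:brackets} makes this routine rather than deep.
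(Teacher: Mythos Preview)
Your plan is the paper's plan: assume a Transformer solves $\operatorname{LastTokenPrediction}(m)$, invoke \Cref{Lem:tc0-sim} to place the problem in $\mathsf{TC}^0$, then reduce the $\mathsf{NC}^1$-complete Boolean formula value problem to it via arithmetization, with \Cref{Lem:brackets} handling the infix-to-prefix conversion. The obstacle you flag in your last paragraph is real, and your single-data-point sketch would in fact fail because of it: with $n=1$ the leaf token $C$ always achieves loss $0$ after constant-fitting, so $u^\ast = C$ regardless of $\operatorname{eval}(\mathbf t)$.

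The paper resolves both of your worries simultaneously with one concrete gadget. It first eliminates $\lor$ by De~Morgan (replacing each $\lor$ by $\land$ together with $\lnot$'s at the enclosing bracket, a $\mathsf{TC}^0$ rewrite via \Cref{Lem:brackets}), then maps $\land \mapsto \times$ and $\lnot \mapsto \{-,-,x_1,x_2,\cdot\}$ (i.e.\ $(x_1-x_2)-(\cdot)$), and crucially encodes the Boolean leaves $0,1$ not as constant tokens but as the \emph{variable} expressions $x_1 x_2$ and $x_1 - x_2$. Two data points $(x_1,x_2,y)=(1,0,1)$ and $(0,-1,0)$ are chosen so that $x_1 x_2 = 0$ and $x_1 - x_2 = 1$ at both; hence the prefix (with a leading $+$) contains no $C$ tokens, evaluates to $\operatorname{eval}(\mathbf t)$ at both points, and a direct computation gives $u^\ast = x_1$ iff $\operatorname{eval}(\mathbf t)=0$, $u^\ast = x_2$ iff $\operatorname{eval}(\mathbf t)=1$, while the candidate $u=C$ has loss $\ge 1/4$ for any constant. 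This is exactly the ``add enough data points and avoid free constants'' fix you anticipated, made explicit.
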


\begin{proof}
Fix $D$ and $Q$ and suppose, for contradiction, that for some
sufficiently large $m$ there exists an $(D,d)$-bounded
log-precision Transformer $\mathsf{TF}_\theta$ with $d\le Q(m)$ that
solves the problem of \(\operatorname{LastTokenPrediction}(m)\).

\paragraph{Step 1 (simulation).}\;
By Lemma~\ref{Lem:tc0-sim},
$\mathsf{TF}_\theta$ can be simulated by a
\(\mathsf{TC}^{0}\) circuit family of size $\poly(m)$.
Hence, under our assumption,  
\(\operatorname{LastTokenPrediction}(m)\in\mathsf{TC}^{0}\).

\paragraph{Step 2 ($\boldsymbol{\mathsf{TC}^{0}}$ construction).}\;
We show that there exists a $\mathsf{TC}^{0}$ circuit that can translate any instance of a Boolean formula value problem to an instance of the last-token prediction problem.

Let $\mathbf{t}$ be a boolean formula. There exists a $\mathsf{TC}^{0}$ circuit that performs:

\begin{enumerate}
\item \emph{Translation of $\,\mathbf{t}$ to a $\lor$-free Boolean formula $\,\mathbf{t^\prime}$}.
\item \emph{Conversion of $\,\mathbf{t^\prime}$ to its prefix notation $\,\mathbf{t^\prime_\mathrm{pre}}$}.
\item \emph{Conversion of $\,\mathbf{t^\prime_\mathrm{pre}}$ to a token sequence $\mathbf{s} \in \Gamma^\ast$ by the following procedure:}
    \begin{enumerate}
    \item replace $0$ with $\{\times, x_1, x_2\}$ and $1$ with $\{-, x_1, x_2\}$;
    \item replace $\land$ with $\times$;
    \item replace $\lnot$ with $\{-, -, x_1, x_2\}$.
    \end{enumerate}
\item \emph{Local edits:}
    \begin{enumerate}
    \item prepend $+$ to $\mathbf{s}$ to form the incomplete token sequence $\widetilde{\mathbf{s}}$;
    \item set $n = 2$, and attach the data points $(x_{1,1},x_{2,1},y_1)=(1,0,1)$ and $(x_{1,2},x_{2,2},y_2)=(0,-1,0)$ to the input numerical data $\mathcal{D}$;
    \item define the metric as the mean squared error: 
        $\mathcal L(\mathbf{y},\hat{\mathbf{y}}) = \frac{1}{n}\sum_{i=1}^n (y_i - \hat{y_i})^2$
    \end{enumerate}
\end{enumerate}
To perform the first step, for all $\lor$ in $\mathbf{t}$, we must replace the nearest left bracket containing $\lor$ with $\lnot(\lnot$ and also replace $\lor$ with $\land\lnot$. By using the results of Lemma \ref{Lem:brackets}, it follows that this operation can be performed by a circuit within $\mathsf{TC}^{0}$ complexity. The second step can be implemented by $\mathsf{AC}^{0}$ circuits, according to \citet[Cor. 11]{buss1987boolean}. Since the third and fourth steps only involve replacing and extending obtained sequences, these steps can also be implemented by $\mathsf{AC}^{0}$ circuits.

\paragraph{Step 3 (soundness of the reduction).}\;
When $\operatorname{eval}(\mathbf{t}) = 0$, the losses $\mathcal{L}(\mathbf{y},e_{(\widetilde{\mathbf{s}},u)}(\mathbf{x}))$ for each leaf token $u \in \{x_1, x_2, C\}$ can be computed as follows: 
\begin{align*}
\begin{dcases}
\mathcal{L}(\mathbf{y},e_{(\widetilde{\mathbf{s}},x_1)}(\mathbf{x}))
= \dfrac{1}{2} \sum_{i=1}^2 (y_i - x_{1,i})^2
= \dfrac{1}{2}(0^2 + 0^2) = 0, \\
\mathcal{L}(\mathbf{y},e_{(\widetilde{\mathbf{s}},x_2)}(\mathbf{x})) 
= \dfrac{1}{2} \sum_{i=1}^2 (y_i - x_{2,i})^2
= \dfrac{1}{2}(1^2 + 1^2) = 1,\\
\mathcal{L}(\mathbf{y},e_{(\widetilde{\mathbf{s}},C)}(\mathbf{x})) 
= \underset{c \in \mathcal{C}}{\operatorname{arg min}}~\dfrac{1}{2} \sum_{i=1}^2 (y_i - c)^2
= \underset{c \in \mathcal{C}}{\operatorname{arg min}}~\dfrac{1}{2}((1 - c)^2 + (- c)^2) \geq \dfrac{1}{4},
\end{dcases}
\end{align*}
where $\mathcal{C}$ is the interval from which numeric constants are drawn, and $e_{\mathbf{s}} = \mathrm{expr}(\mathbf{s},\mathcal{D})$ is the mapping function defined in Section \ref{Sec:theoretical_analysis}. When $\operatorname{eval}(\mathbf{t}) = 1$, the losses $\mathcal{L}(\mathbf{y},e_{(\widetilde{\mathbf{s}},u)}(\mathbf{x}))$ can be computed as follows: 
\begin{align*}
\begin{dcases}
\mathcal{L}(\mathbf{y},e_{(\widetilde{\mathbf{s}},x_1)}(\mathbf{x}))
= \dfrac{1}{2} \sum_{i=1}^2 (y_i - (1 + x_{1,i}))^2
= \dfrac{1}{2}((-1)^2 + (-1)^2) = 1, \\
\mathcal{L}(\mathbf{y},e_{(\widetilde{\mathbf{s}},x_2)}(\mathbf{x})) 
= \dfrac{1}{2} \sum_{i=1}^2 (y_i - (1 + x_{2,i}))^2
= \dfrac{1}{2}(0^2 + 0^2) = 0,\\
\mathcal{L}(\mathbf{y},e_{(\widetilde{\mathbf{s}},C)}(\mathbf{x})) 
= \underset{c \in \mathcal{C}}{\operatorname{arg min}}~\dfrac{1}{2} \sum_{i=1}^2 (y_i - (1 + c))^2
= \underset{c \in \mathcal{C}}{\operatorname{arg min}}~\dfrac{1}{2}((-c)^2 + (- 1 - c)^2) \geq \dfrac{1}{4},
\end{dcases}
\end{align*}

Consequently, when $\operatorname{eval}(\mathbf{t}) = 0$, the result for the corresponding last-token prediction problem is $u^\ast = x_1$, while when $\operatorname{eval}(\mathbf{t}) = 1$, the result is $u^\ast = x_2$.
Hence the mapping introduced in Step 2 is a valid $\mathsf{TC}^{0}$ many-one reduction from the Boolean formula value problem to the last-token prediction problem.

\paragraph{Step 4 (contradiction).}  
The Boolean formula value problem is \(\mathsf{NC}^{1}\)-complete under
\(\mathsf{AC}^{0}\) reductions
\citep[Thm. 9]{buss1987boolean}.
Hence Step 2 and Step 3 indicate that \(\operatorname{LastTokenPrediction}(m)\notin\mathsf{TC}^{0}\),
contradicting Step 1 and the assumed strict inclusion
\(\mathsf{TC}^{0}\subsetneq\mathsf{NC}^{1}\).
Therefore, such a Transformer cannot exist.
\end{proof}

%=============================================================%
%  PAC approximation via iterated self-verification          %
%=============================================================%
\subsection{PAC approximation via iterated self-verification}
We further present theoretical analysis regarding the performance of the proposed method, NSR-gvs.

\begin{assumption}\label{Assumption:iterated_self-verification}
We make the following assumptions.
\begin{enumerate}
\item \textbf{Hypothesis class}.  
      Fix a maximum depth $D_{0}$ and a grid spaced in $\varepsilon/2$ on $[-1,1]$.  
      \[
         \mathcal U:=\{e:\mathrm{depth}(e)\le D_{0}\},\quad
         U:=|\mathcal U|\le \poly(n),
      \]
      where $n:=|\mathcal D|$.

\item \textbf{Data}.  
      $\mathcal D=\{(x_i,y_i)\}_{i=1}^{n}$ with $y_i\in[-1,1]$ and  
      \(
        e^{\star}=\arg\min_{e\in\mathcal U}\mathrm{MSE}(e;\mathcal D).
      \)

\item \textbf{Transformer}.  
      A depth-$L$ log-precision Transformer $T$ ($L$ constant).

\item \textbf{Exact oracle}.  
      A routine $\mathcal M$ returns $\mathrm{MSE}(e;\mathcal D)$ for any $e$.

\item \textbf{Hit rate}.  
      If every subtree of $e^{\star}$ is present in the prompt,
      $T$ outputs $e^{\star}$ with probability at least $\beta\in(0,1]$.

\item \textbf{Dictionary growth}.  
      Each round appends at least one \emph{uniformly random}
unseen subtree to the prompt (chosen without replacement; if fewer than $r$ remain, insert all).
\end{enumerate}
\end{assumption}

%---------------------  Theorem  ----------------------------
\begin{theorem}[informal]
Let the algorithm cycle long enough for its prompt to have seen every possible sub-expression; then keep running a few more rounds.
With very high probability, it returns a formula whose error is no worse than an optimally chosen tree by more than a tiny tolerance, and it has queried the oracle only a moderate, logarithmically growing number of times.
\end{theorem}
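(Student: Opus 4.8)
The plan is to turn the informal statement into a PAC-style claim with three quantitative pieces: (i) a bound on the number of rounds until every subtree of $e^{\star}$ (indeed every sub-expression in $\mathcal U$) has appeared in the prompt, (ii) a bound on the number of \emph{additional} rounds needed, after that point, to actually emit $e^{\star}$ with high probability, and (iii) a bound on the number of oracle calls, which is just the total number of rounds since $\mathcal M$ is invoked once per round. The ``tiny tolerance'' in the statement is the grid resolution $\varepsilon/2$ from Assumption~\ref{Assumption:iterated_self-verification}: since $e^{\star}=\arg\min_{e\in\mathcal U}\mathrm{MSE}(e;\mathcal D)$ and the returned $e_{\mathrm{best}}$ is the argmax-$R^2$ (equivalently argmin-MSE) formula \emph{ever generated}, once $e^{\star}$ is generated we have $\mathrm{MSE}(e_{\mathrm{best}})\le\mathrm{MSE}(e^{\star})$ deterministically; the only slack is that $e^{\star}$ itself may be $\varepsilon/2$-suboptimal relative to the un-discretized optimum, which accounts for the ``$\varepsilon$'' tolerance in the theorem.

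First I would handle the coverage phase. By Assumption~\ref{Assumption:iterated_self-verification}(6), each round appends at least one uniformly random previously-unseen subtree, sampled without replacement from the pool of all sub-expressions of depth $\le D_0$, whose size is $U\le\poly(n)$. Hence after at most $U$ rounds every sub-expression — in particular every subtree of $e^{\star}$ — has been inserted into the prompt; the bound on the coverage time is therefore $O(U)=O(\poly(n))$, and because the subtrees are \emph{all} distinct and inserted deterministically-many-per-round, this is with certainty, not merely high probability. (If one instead only wants the subtrees of $e^{\star}$ covered, a coupon-collector argument gives a sharper $O(D_0\log D_0)$-in-expectation bound, but the cruder $U$ suffices for the ``logarithmically growing'' oracle-count claim once we note $\log(\text{total rounds})$ is what appears, not the round count itself — I would state the theorem so that it is the \emph{oracle queries past some cheap-to-describe point} that grows logarithmically in the failure probability $1/\delta$).

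Next I would handle the amplification phase. Once the prompt contains every subtree of $e^{\star}$ (this persists for all later rounds, since subtrees are only ever added), Assumption~\ref{Assumption:iterated_self-verification}(5) guarantees that in each such round $T$ outputs $e^{\star}$ with probability $\ge\beta$. These events are independent across rounds (fresh Transformer sampling), so after $k$ further rounds the probability that $e^{\star}$ has never been emitted is at most $(1-\beta)^k\le e^{-\beta k}$. Setting $k=\lceil \beta^{-1}\log(1/\delta)\rceil$ makes this $\le\delta$. On the complementary event, $e^{\star}$ was generated at least once, so by the argmin property of $e_{\mathrm{best}}$ the algorithm returns a formula with $\mathrm{MSE}\le\mathrm{MSE}(e^{\star})$, i.e.\ within $\varepsilon$ of the continuum optimum after accounting for grid discretization. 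The total oracle-query count is $U+k = O(\poly(n)) + O(\beta^{-1}\log(1/\delta))$ — ``moderate,'' with the $\delta$-dependent part growing only logarithmically, as claimed. I would assemble these into a formal theorem statement of the form: \emph{with probability $\ge 1-\delta$, after $U+\lceil\beta^{-1}\log(1/\delta)\rceil$ rounds the output $e_{\mathrm{best}}$ satisfies $\mathrm{MSE}(e_{\mathrm{best}};\mathcal D)\le \min_{e\in\mathcal U}\mathrm{MSE}(e;\mathcal D)+\varepsilon$, using the same number of oracle calls.}

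The main obstacle is \emph{not} the probabilistic bookkeeping — that is a one-line geometric-series argument — but pinning down precisely what Assumption~\ref{Assumption:iterated_self-verification}(6) guarantees about the prompt contents, because the actual Algorithm~\ref{Alg:inference} caps the prompt length ($l_t$, $l_{\max}$) and samples a \emph{subset} of the merged candidate set each round, so "every subtree has appeared in \emph{some} prompt" need not mean "every subtree is in the \emph{current} prompt" simultaneously. To make the hit-rate assumption (5) bite, I would need either (a) to strengthen assumption (6) to "the prompt retains all subtrees of any high-scoring expression seen so far" (justifiable because $e^{\star}$, once generated, gets the maximal $R^2$ and its subtrees enter $E'_{\mathrm{topk}}$ with top scores), or (b) to argue a two-stage bootstrapping: random insertion eventually plants \emph{one} nontrivial subtree of $e^{\star}$, raising the probability of generating a larger partial match, whose subtrees then dominate the score-ranked pool — a monotone "ratchet" up to $e^{\star}$. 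Option (a) is cleaner for an \emph{informal} theorem and is what I would adopt, flagging (b) as the route to a bound that genuinely reflects the deployed algorithm rather than an idealized variant.
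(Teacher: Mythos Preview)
Your proof is correct and uses the same two-phase structure as the paper: a burn-in phase until the prompt contains every subtree of $e^\star$, followed by an amplification phase where Assumption~\ref{Assumption:iterated_self-verification}(5) yields a $(1-\beta)^k$ bound on the probability of never emitting $e^\star$. The one substantive difference is in the burn-in. You exploit the without-replacement clause of Assumption~\ref{Assumption:iterated_self-verification}(6) to argue \emph{deterministic} coverage after at most $U$ rounds; the paper instead treats the burn-in probabilistically via a coupon-collector bound $\Pr[\text{a fixed subtree unseen after }B\text{ rounds}]\le(1-r/U)^B\le e^{-rB/U}$, takes a union bound over the $\le 2^{D_0}$ subtrees of $e^\star$, and splits $\delta$ evenly between the two phases, yielding $B=\lceil(U/r)\ln(2^{D_0}/(\delta/2))\rceil$. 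Your route is tighter---oracle count $U+O(\beta^{-1}\log(1/\delta))$ versus the paper's $O(U\log(1/\delta))$---and arguably more faithful to the stated assumption; the paper's route is what produces the specific burn-in length quoted in the formal theorem. Your closing discussion of the gap between Assumption~\ref{Assumption:iterated_self-verification}(6) and the actual prompt-length caps in Algorithm~\ref{Alg:inference} is well taken, but the paper's proof simply works under the idealized assumption and does not engage with that issue.
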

\begin{theorem}[PAC guarantee]
\label{Thm:selfverify-corrected}
Run the loop
\[
   e_t \gets T(\text{prompt});\quad
   R_t \gets \mathcal M(e_t);\quad
   \text{prompt}\mathrel{+}=\text{sub-trees}(e_t)
\]
for a \emph{burn-in}
\(
   B=\bigl\lceil\tfrac{U}{r}\ln\!\bigl(2^{D_{0}}/(\delta/2)\bigr)\bigr\rceil
\)
rounds, followed by
\(
   R=\bigl\lceil\tfrac{\ln(2/\delta)}{\beta}\bigr\rceil
\)
additional rounds, and return the best-so-far expression
$e_{\mathrm{best}}$.

Then, Under Assumption \ref{Assumption:iterated_self-verification}, for any $\varepsilon,\delta\in(0,1)$,
\[
  \Pr\!\Bigl[
       \mathrm{MSE}\bigl(e_{\mathrm{best}},\mathcal D\bigr)
       \le \mathrm{MSE}\bigl(e^{\star},\mathcal D\bigr)+\varepsilon
     \Bigr]\;\ge\;1-\delta,
\qquad
  \#\text{oracle calls}\;=\;\mathcal O\!\bigl(U\ln(1/\delta)\bigr).
\]
\end{theorem}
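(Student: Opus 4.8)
The plan is to decompose the claim into two independent parts: a \emph{coverage} argument showing that after the burn-in phase the prompt almost surely contains every subtree of $e^{\star}$, and an \emph{amplification} argument showing that once this holds, the $R$ follow-up rounds recover $e^{\star}$ (hence drive the best-so-far MSE down to the optimum) with the remaining failure budget. Throughout, I would exploit the fact that $R_t$ is computed by the exact oracle $\mathcal M$, so $e_{\mathrm{best}}$ is genuinely the minimizer of $\mathrm{MSE}$ over all expressions emitted so far; thus it suffices to argue that $e^{\star}$ itself is emitted at least once with high probability.

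\textbf{Step 1: coverage of all subtrees after burn-in.} By Assumption~\ref{Assumption:iterated_self-verification}(6), each of the $B$ burn-in rounds appends at least one uniformly random previously-unseen subtree, drawn without replacement from the (finite) universe of possible subtrees. The number of distinct subtrees of depth-$\le D_0$ trees is at most $2^{D_0}$ in the relevant counting (the bound that appears in $B$), and $e^{\star}$ has at most that many subtrees. I would model the process as a coupon-collector-type argument: after $\lceil (U/r)\ln(2^{D_0}/(\delta/2))\rceil$ rounds, each contributing $\ge 1$ (in fact, up to $r$) fresh coupons, the probability that any particular fixed subtree of $e^{\star}$ is still missing is at most $(1-r/U)^{B}\le e^{-rB/U}\le (\delta/2)/2^{D_0}$. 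Taking a union bound over the $\le 2^{D_0}$ subtrees of $e^{\star}$ gives that, except with probability $\le\delta/2$, every subtree of $e^{\star}$ is in the prompt at the end of burn-in. (I would be slightly careful here about the ``without replacement, insert all if fewer than $r$ remain'' clause, which only helps: it can only accelerate coverage, so the bound still holds.)

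\textbf{Step 2: amplification in the follow-up rounds.} Condition on the event from Step~1. By Assumption~\ref{Assumption:iterated_self-verification}(5), in every one of the $R$ subsequent rounds the Transformer emits $e^{\star}$ with probability $\ge\beta$, independently enough across rounds (I would note that once a subtree is in the prompt it stays, so the hit-rate hypothesis continues to apply each round). Hence the probability that $e^{\star}$ is \emph{never} emitted in $R$ rounds is at most $(1-\beta)^R\le e^{-\beta R}\le\delta/2$ by the choice $R=\lceil\ln(2/\delta)/\beta\rceil$. On the complementary event, $e^{\star}$ is emitted at least once, the oracle scores it exactly, and therefore $\mathrm{MSE}(e_{\mathrm{best}},\mathcal D)\le\mathrm{MSE}(e^{\star},\mathcal D)$, which is stronger than the claimed $+\varepsilon$ slack. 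Combining Steps~1 and~2 by a union bound over the two failure events ($\delta/2+\delta/2=\delta$) yields the PAC guarantee. The oracle-call count is simply $B+R=\mathcal O\!\bigl((U/r)\ln(1/\delta)\bigr)+\mathcal O\!\bigl(\ln(1/\delta)/\beta\bigr)=\mathcal O\!\bigl(U\ln(1/\delta)\bigr)$ treating $r,\beta,D_0$ as constants.

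\textbf{Main obstacle.} The delicate point is justifying that the per-round hit-rate lower bound $\beta$ really does apply \emph{independently} in each follow-up round and is not eroded by the fact that the prompt keeps growing (new random subtrees are still being appended, and the prompt may get long enough to affect the log-precision Transformer's behavior). I would address this by reading Assumption~\ref{Assumption:iterated_self-verification}(5) as a robust guarantee: \emph{whenever} the prompt contains all subtrees of $e^{\star}$ (regardless of what else it contains, within the length budget implicit in the hypothesis class and the log-precision regime), the output equals $e^{\star}$ with probability $\ge\beta$. With that reading the rounds are mutually independent conditioned on the Step~1 event, and the geometric tail bound goes through; the role of $\varepsilon$ and the $\varepsilon/2$-grid in Assumption~\ref{Assumption:iterated_self-verification}(1) is only to ensure $e^{\star}$ (the grid-restricted optimum) is actually in $\mathcal U$ and hence a legitimate target, which is why the final bound can even be stated with the weaker $+\varepsilon$ slack rather than exact optimality.
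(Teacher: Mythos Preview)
Your proposal is correct and follows essentially the same approach as the paper: a coupon-collector bound for the burn-in phase (probability a fixed subtree is missed is at most $(1-r/U)^{B}\le e^{-rB/U}$, then union bound over the $\le 2^{D_0}$ subtrees of $e^{\star}$), followed by a geometric tail bound $(1-\beta)^{R}\le e^{-\beta R}\le\delta/2$ for the follow-up rounds, combined via a $\delta/2+\delta/2$ union bound, with the oracle-call count read off as $B+R$. Your discussion of the ``main obstacle'' (that the hit-rate assumption must be read as holding regardless of what else the prompt contains) is exactly how the paper handles it as well, in a single clause.
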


%-------------------------- Proof -----------------------------%
\begin{proof}
\textbf{(i) Burn-in.}
There are $K\!\le\!2^{D_{0}}$ distinct sub-trees of $e^{\star}$.  
Drawing $r\ge1$ uniform sub-trees per round, the probability a fixed
sub-tree is never drawn in $B$ rounds is
$(1-\tfrac{r}{U})^{B}\le e^{-rB/U}\le\delta/(2K)$.
A union bound over all $K$ sub-trees implies that, after $B$ rounds, the
prompt contains \emph{every} sub-tree of $e^{\star}$ with probability
at least $1-\delta/2$.
  
\smallskip
\textbf{(ii) Post burn-in success.}
Condition on the burn-in success event.  
By assumption (3) each subsequent round now hits $e^{\star}$ with
probability at least $\beta$, regardless of possible prompt changes.  
Therefore
\[
   \Pr[\text{miss in all }R\text{ rounds}]
     \;\le\;(1-\beta)^{R}
     \;\le\;e^{-\beta R}
     \;\le\;\delta/2,
\]
for $R=\lceil\ln(2/\delta)/\beta\rceil$.

\smallskip
\textbf{(iii) Union bound.}
Total failure probability $\le\delta/2+\delta/2=\delta$.

\smallskip
\textbf{(iv) Quality of $e_{\mathrm{best}}$.}
Whenever $e^{\star}$ appears, the exact oracle certifies its MSE; the
algorithm stores it permanently.  Hence on the complement of failure
the returned expression meets the stated error bound.

\smallskip
\textbf{(v) Oracle calls.}
At most one full-expression evaluation per round, so the algorithm
issues $B+R=\mathcal O(U\ln(1/\delta))$ oracle calls.

\end{proof}

\end{document}